\def\eqref#1{equation~\ref{#1}}
\def\1{\bm{1}}
\DeclareMathAlphabet{\mathsfit}{\encodingdefault}{\sfdefault}{m}{sl}
\SetMathAlphabet{\mathsfit}{bold}{\encodingdefault}{\sfdefault}{bx}{n}
\newcommand{\E}{\mathbb{E}}
\newcommand{\Var}{\mathrm{Var}}
\newcommand{\Cov}{\mathrm{Cov}}
\newtheorem{prop}{Proposition}
\newcommand{\minisection}[1]{\vspace{0.0in} \noindent {\bf #1}}
\theoremstyle{plain}
\theoremstyle{definition}
\theoremstyle{remark}
\newcommand{\chl}{\cellcolor{cyan!30}}
\title{Covariances for Free: Exploiting Mean Distributions for Training-free Federated Learning}
\author{Dipam Goswami\textsuperscript{1,2} \enspace Simone Magistri\textsuperscript{3} \enspace Kai Wang\textsuperscript{2,5,6}\thanks{Corresponding author.}  \\  \enspace \textbf{Bartłomiej Twardowski}\textsuperscript{1,2,4} 
\enspace \textbf{Andrew D. Bagdanov}\textsuperscript{3} \enspace \textbf{Joost van de Weijer}\textsuperscript{1,2}  \\ 
\\
\textsuperscript{1}Department of Computer Science, Universitat Autònoma de Barcelona, Spain \\
\textsuperscript{2}Computer Vision Center, Barcelona, Spain \\ \space
\textsuperscript{3}Media Integration and Communication Center (MICC), University of Florence, Italy\\ \space 
\textsuperscript{4}IDEAS Research Institute, Warsaw, Poland \quad \textsuperscript{5}City University of Hong Kong \\
\textsuperscript{6}Program of Computer Science, City University of Hong Kong (Dongguan)  \\
{\tt\small dgoswami@cvc.uab.es, simone.magistri@unifi.it, kai.wang@cityu-dg.edu.cn}
}
\begin{document}

\maketitle

\begin{abstract}
Using pre-trained models has been found to reduce the effect of data heterogeneity and speed up federated learning algorithms. Recent works have explored training-free methods using first- and second-order statistics to aggregate local client data distributions at the server and achieve high performance without any training. In this work, we propose a training-free method based on an unbiased estimator of class covariance matrices which only uses first-order statistics in the form of class means communicated by clients to the server. We show how these estimated class covariances can be used to initialize the global classifier, thus exploiting the covariances without actually sharing them. We also show that using only within-class covariances results in a better classifier initialization. Our approach improves performance in the range of 4-26\% with exactly the same communication cost when compared to methods sharing only class means and achieves performance competitive or superior to methods sharing second-order statistics with dramatically less communication overhead. The proposed method is much more communication-efficient than federated prompt-tuning methods and still outperforms them. Finally, using our method to initialize classifiers and then performing federated fine-tuning or linear probing again yields better performance. Code is available at~\url{https://github.com/dipamgoswami/FedCOF}.
\end{abstract}

\section{Introduction}
\label{sec:intro}
Federated learning (FL) is a widely used paradigm for distributed learning across multiple clients. In FL, each client trains their local model on their private data and then sends model updates to a common global server that aggregates this information into a global model. The objective is to learn a global model that performs similarly to a model jointly trained on all the client data. A major concern in existing FL algorithms~\citep{mcmahan2017communication} is the poor performance when the client data is not identically and independently distributed (iid) or when classes are imbalanced between clients~\citep{zhao2018federated, li2019convergence, acar2021federated, karimireddy2020scaffold}. 
In \cite{luo2021no}, the authors showed that client drift in FL is mainly due to drift in client classifiers which optimize to local data distributions, resulting in forgetting knowledge from clients of previous rounds~\citep{legate2023re,caldarola2022improving}. Another challenge in FL is the partial participation of clients in successive rounds~\citep{li2019convergence}, which becomes particularly acute with large numbers of clients~\citep{ruan2021towards,kairouz2021advances}. To address these challenges, recent works have focused on algorithms to better tackle data heterogeneity across clients~\citep{luo2021no,tan2022federated,legate2023guiding,fani2024accelerating}.

Motivated by results from transfer learning~\citep{He_2019_ICCV}, several recent works on FL have studied the impact of using pre-trained models and observe that it can significantly reduce the impact of data heterogeneity~\citep{legate2023guiding,nguyen2022begin,tan2022federated,chen2022importance,qu2022rethinking,shysheya2022fit,luo2021no,tan2022fedproto,weng2024probabilistic}. 
An important finding in several of these works is that sending local class means to the server instead of raw features is more efficient in terms of communication costs, eliminates privacy concerns, and is robust to gradient-based attacks~\citep{chen2022importance,zhu2019deep}. The authors of \cite{tan2022federated} used pre-trained models to compute and then share class means as the representative of each class, and in \cite{legate2023guiding} the authors showed that aggregating local means into global means and setting them as classifier weights (FedNCM) achieves very good performance without any training. FedNCM incurs very little communication cost and enables stable initialization.
Recently, the authors of Fed3R~\citep{fani2024accelerating} explored the impact of sharing second-order feature statistics from clients to server to solve the ridge regression problem~\citep{boyd2004convex} in FL and improve over FedNCM. The sharing of second-order statistics has also previously been explored for classifier calibration after federated optimization~\citep{luo2021no}. 

Although it is evident that exploiting second-order feature statistics results in better and more stable classifiers, it poses new problems. Notably, sharing second-order statistics for high-dimensional features from clients to the server significantly increases communication overhead and exposes clients to privacy risks~\citep{luo2021no,fani2024accelerating}.
To reap the benefits of second-order client statistics, while at the same time mitigating these risks, in this paper we propose Federated learning with COvariances for Free (FedCOF) which only communicates class means from clients to the server (see~\cref{overview}). We show that, from just these class means and the mathematical relationship between their covariance and the class covariance matrices, we can compute a provably unbiased estimator of global class covariances on the server. FedCOF is a training-free method that exploits pre-trained feature extractors. It uses the same communication budget as FedNCM while delivering performance comparable to or even superior to Fed3R (see~\cref{fig:intro}). Additionally, we show how to improve classifier initialization using only within-class covariances, setting the classifier weights based on aggregated class means and our estimated class covariances. To summarize, our main contributions are:
\begin{itemize}
\item We propose FedCOF, a training-free method that exploits a \textit{provably unbiased estimator of class covariances}, which requires only class means from clients, thus avoiding the need to share second-order statistics, reducing communication costs and mitigating privacy concerns.
\item We show how FedCOF leverages \textit{within-class covariances}, estimated solely from client means, to initialize the global classifier, yielding more stable and better-conditioned solutions than using both within- and between-class scatter matrices.
\item We validate FedCOF across multiple FL benchmarks, including the non-iid iNaturalist-Users-120K dataset, achieving state-of-the-art results with lower communication cost than methods using second-order statistics, and showing superior performance to recent federated prompt-tuning approaches while also serving as an effective initialization for subsequent federated optimization methods such as fine-tuning and linear probing.
\end{itemize}

\begin{figure}[!t]
\begin{minipage}{0.58\linewidth}
\centering
\begin{center}
\vspace{-20pt}
\captionof{table}{FedNCM~\citep{legate2023guiding} shares only class means $\hat{\mu}_{k,c}$ and has minimal communication. Fed3R~\citep{fani2024accelerating} requires sum of class features $B_k$ and feature matrix $G_k$ from all clients, thereby increasing the communication cost by $d^2K$. 
We propose FedCOF, which shares only class means and estimates a global class covariance $\hat{\Sigma}_c$ to initialize the classifier weights. 
Note that only a small subset of all classes are present in each client. For simplicity, we show the upper bound of communication cost here where $C'$ denotes the maximum number of classes present in a single client.}
\resizebox{\textwidth}{!}{
\begin{tabular}{c|c|c|c}
\hline
Method & Client Shares & Server Uses & Comm. Cost \\
\hline
\rule{0pt}{3ex} 
FedNCM & $\{\hat{\mu}_{k,c}, n_{k,c}\}_{c=1}^{C'}$ & $\{\{\hat{\mu}_{k,c}, n_{k,c}\}_{c=1}^{C'}\}_{k=1}^K$ & $dC'K$  \\
\rule{0pt}{3ex} 
Fed3R & $G_k, B_k$ & $\{G_k, B_k\}_{k=1}^K$ & $(dC' + d^2)K$ \\
\rule{0pt}{3ex} 
\rule{0pt}{3ex} 
FedCOF & $\{\hat{\mu}_{k,c}, n_{k,c}\}_{c=1}^{C'}$ & $\{\{\hat{\mu}_{k,c}, n_{k,c}\}_{c=1}^{C'}\}_{k=1}^K, \{\hat{\Sigma}_c\}_{c=1}^{C'}$ & $dC'K$  \\
\hline
\end{tabular}
}
\vspace{-10pt}
\label{overview}
\end{center}
\end{minipage}
\hspace{5pt}
\begin{minipage}{0.39\linewidth}
\centering
\includegraphics[width=0.9\linewidth]{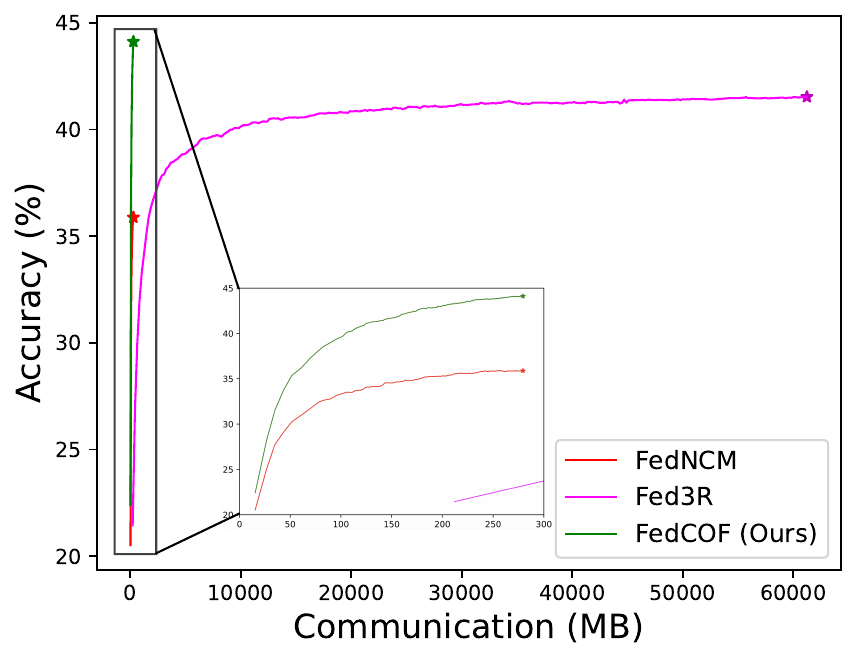}
\vspace{-8pt}
\captionof{figure}{Accuracy vs. communication cost using pre-trained MobileNetv2 on iNaturalist-Users-120K. FedCOF outperforms Fed3R while having the same communication cost as FedNCM.}
\label{fig:intro}
\end{minipage}
\vspace{-13pt}
\end{figure}

\section{Related Work}
\label{sec:related_work}

\minisection{Federated learning.}
FL focuses on neural network training in distributed environments~\citep{zhang2021survey,wen2023survey}. Initial works like FedAvg~\citep{mcmahan2017communication} proposed training by averaging of distributed models. Later works focus more on non-iid settings, where data among the clients is more heterogeneous~\citep{li2019convergence,kairouz2021advances,wang2021field,lifedbn}. 
FedNova~\citep{Wang2020fednova} normalizes local updates before averaging to address objective inconsistency. Scaffold~\citep{karimireddy20scaffold} employs control variates to correct drift in local updates. FedProx~\citep{li2020fedprox} introduces a proximal term in local objectives to stabilize the learning process. \cite{reddi2020adaptive} proposed use of  adaptive optimization methods, such as Adagrad, Adam and Yogi, at the server side. While CCVR~\citep{luo2021no} proposed a classifier calibration method that aggregates class means and covariances from clients, other recent works~\cite{li2023no,dong2022spherefed,oh2021fedbabu,kim2024feddr+} proposed using fixed classifiers inspired by the neural collapse phenomenon.
% ~\citep{papyan2020prevalence}. 
After federated training with fixed classifiers, FedBABU~\citep{oh2021fedbabu} proposed to fine-tune the classifiers and SphereFed~\citep{dong2022spherefed} proposed a closed-form classifier calibration.

\minisection{FL with pre-trained models.}
While conventional FL methods start training from scratch without any pre-training, we focus on the FL setting using pre-trained models.
FedFN~\citep{kim2023fedfn} recently 
highlighted that using pre-trained weights can sometimes negatively impact performance.
However, there has been increasing interest in incorporating pre-trained, foundation models into federated learning. Multiple works propose using pre-trained weights which reduces the impact of client data heterogeneity and achieves faster model convergence~\citep{nguyen2022begin,tan2022federated,chen2022importance,qu2022rethinking,shysheya2022fit,weng2024probabilistic}. Federated prompt-tuning methods~\citep{weng2024probabilistic} proposed to reduce the communication cost by tuning only prompts in a federated manner using pre-trained ViT models.
Recently, it has been shown that \textit{training-free methods} using pre-trained networks, achieve strong performance without any training by exploiting feature class means~\citep{legate2023guiding} or second-order feature statistics~\citep{fani2024accelerating}. Another recent work~\cite{beitollahifoundation} proposed sharing Gaussian mixture models (multiple sets of means and covariances) for each class from clients to server for one-shot FL.
In this work, we propose a training-free method with pre-trained models that estimates class covariances from only client means for initializing the global classifier.

\section{Preliminaries}
\label{sec:prelims}

In the FL setting we assume $K$ clients, each with a local dataset $D_k = (X_k, Y_k)$, for $k \in \{1,...,K\}$, and denote by $N$ the total number of images across all clients. The model is composed of a feature extractor $f$, parameterized by $\theta$, which maps images to  $d$-dimensional embeddings, and a classifier $h: \mathbb{R}^d \rightarrow \mathbb{R}^C$, parameterized by $W$, where $C$ is the total number of classes. In federated optimization, each client trains its local model on its private data $D_k$ and transmits only intermediate information -- such as parameter updates or feature statistics -- while a central server aggregates these signals to minimize a global objective, without revealing raw data  \citep{konevcny2016federated}.

With the growing availability of high-quality pre-trained models, recent works have focused on scenarios in which all clients are initialized with the same pre-trained feature extractor ~\citep{chen2022importance,legate2023guiding,nguyen2022begin,tan2022federated,fani2024accelerating,weng2024probabilistic}. Notably, \textit{training-free} methods~\citep{fani2024accelerating,legate2023guiding} -- in which only the global classifier is initialized using client feature statistics, without training the feature extractor -- achieve strong performance. They often outperform federated full fine-tuning methods, such as FedAdam and FedAvg, which train client backbones and classifiers, starting from the same shared pre-trained feature extractor but with randomly initialized classifiers, at a fraction of the communication and computation costs. Moreover, this training-free initialization can serve as an effective starting point, improving the performance of federated full fine-tuning. We now discuss recent training-free methods.

\noindent\textbf{Federated NCM.}
FedNCM~\cite{legate2023guiding} employs a Nearest Class Mean (NCM) classifier, where the global linear classifier weights for class $c$ (denoted as $W_c$) are initialized as $\hat{\mu}_c / \|\hat{\mu}_c\|$. Here, $\hat{\mu}_{c}$ represents the global class mean aggregated from the local client class means $\hat{\mu}_{k,c}$ as follows:
\begin{equation}
    \hat{\mu}_c = \frac{1}{N_c} \sum_{k=1}^{K} n_{k,c} \, \hat{\mu}_{k,c}; \quad
    \hat{\mu}_{k,c} = \frac{1}{n_{k,c}} \sum_{x \in X_{k,c}} f(x),
    \label{eq:fedncm}
\end{equation}
where $X_{k,c}$ is a subset of $X_k$ having images of class $c$, $n_{k,c}$ refers to number of images in $X_{k,c}$, and $N_c = \sum_{k=1}^{K} n_{k,c}$ is the number of images of class $c$ across all clients.

\noindent\textbf{Federated Ridge Regression. } 
While FedNCM exploits only class means, Fed3R~\citep{fani2024accelerating} recently proposed using ridge regression which requires second-order feature statistics from all clients to initialize the global classifier, leading to improved performance compared to FedNCM. The ridge regression problem aims to find the optimal weights that minimize the following objective: 
\begin{equation}
\label{eq:ridge-problem}
    W^* = \arg\min_{W \in \mathbb{R}^{d \times C}} \|Y - F^\top W\|^2 + \lambda \|W\|^2,
\end{equation}
where $F \in \mathbb{R}^{d \times N}$ is the feature matrix extracted using a pre-trained model and $Y \in \mathbb{R}^{N \times C}$ contains one-hot encoded labels for the $N$ features over $C$ classes. The closed-form solution is given by:
\begin{equation}
    W^* = (G + \lambda I_d)^{-1} B \label{eq:sol-ridge},
\end{equation}
where $G = F F^\top \in \mathbb{R}^{d \times d}$, $B = FY \in \mathbb{R}^{d \times C}$, $I_d \in \mathbb{R}^{d\times d}$ is the identity matrix, and $\lambda \in \mathbb{R}$ is a hyperparameter. 
In Fed3R, each client $k$ computes two local matrices $G_k=F_k F_k^\top \in \mathbb{R}^{d \times d}$ and $B_k=F_kY_k \in \mathbb{R}^{d \times C}$, where $F_k$ and $Y_k$ are the feature matrix and the labels of client $k$, and then sends them to the global server. The server aggregates these matrices as
    $G = \sum_{k=1}^K G_k, \quad B = \sum_{k=1}^K B_k$
to compute $W^*$, which is normalized and used to initialize the global classifier.

% \vspace{-5pt}
\section{Federated Learning with COvariances for Free (FedCOF)}
\label{sec:method}

In this section, we derive our FedCOF method which is based on a provably unbiased estimator of class covariances and requires transfer of only class means from clients to server.

% \vspace{-5pt}
\subsection{Motivation} 

%\vspace{-5pt}
\textbf{Communication cost.}
While Fed3R is more effective than FedNCM, it requires each client to send an additional $d \times d$ matrix, significantly increasing the communication overhead by $d^2K$ compared to FedNCM which only shares the class means (see~\cref{overview}). 
Fed3R scales linearly with number of clients and quadratically with the feature dimension. 
FedPFT~\cite{beitollahifoundation} shares multiple sets of $(\mu_{k,c},\Sigma_{k,c})$ from clients, further increasing communication costs.
Considering cross-device FL settings~\citep{kairouz2021advances}, having millions of clients, the communication cost required for these methods would be enormous. 

Recent works~\cite{koo2024towards,weng2024probabilistic} focus on parameter-efficient federated fine-tuning where the per-round communication costs are significantly reduced, enabling applications in low-bandwidth communication settings. For example, recent work~\citep{koo2024towards} shows that using pre-trained language models (RoBERTa-base) can yield performance similar to full fine-tuning while using rank-1 LoRA updates and thereby reducing communication cost by 99.8\%. Sharing a similar goal of reducing communication costs, we propose an unbiased covariance estimator without sharing client covariances for training-free FL.

\minisection{Potential privacy concerns.} 
Sharing only class means provides a higher level of data privacy compared to sharing raw data, as prototypes represent the mean of feature representations. It is not easy to reconstruct exact images from prototypes with feature inversion attacks~\citep{luo2021no}. As a result, sharing class means is common in many recent works~\citep{tan2022federated,tan2022fedproto,shysheya2022fit,legate2023guiding}. On the other hand, Fed3R shows that sharing second-order statistics improves performance compared to sharing class means. However, this could expose the feature distribution of clients to the server since all clients employ the same frozen pre-trained model to extract features~\cite{fani2024accelerating}. Sharing covariances makes clients more vulnerable to attacks if secure aggregation protocols are not implemented~\citep{bonawitz2016practical}.

\subsection{Estimating Covariances Using Only Client Means}
\label{sec:4.2}

Our method leverages the statistical properties of sample means to derive an unbiased estimator of the class population covariance based only on per-client class means (see~\cref{fig:method}). 
We model the global features of each class $c$ as drawn from a multivariate distribution with mean $\mu_c$ and covariance $\Sigma_c$, and assume that the local features for class $c$ computed by each client using the shared \textit{frozen} pre-trained model are iid. Under this assumption, class features in a client form a random sample from the class population. 

For client $k$, let $\{F^j_{k,c}\}_{j=1}^{n_{k,c}}$ denote the feature vectors of class $c$, where $n_{k,c}$ is the number of samples from class $c$ assigned to client $k$. The sample mean of these features, $\overline{F}_{k, c} = \frac{1}{n_{k,c}}\sum_{j=1}^{n_{k,c}} F_{k,c}^j$, is itself a random variable with expectation and variance given by:
\begin{equation}
\label{eq:known-res}
    \E[\overline{F}_{k,c}] = \mu_c, \quad \Var[\overline{F}_{k,c}]=\frac{\Sigma_c}{n_{k,c}}
\end{equation}
This well-known result -- whose proof we include in \cref{app-proof-1} for reference -- implies that the variance of sample means over subsets of size $n_{k,c}$ reflects the underlying class covariance. In principle, by assigning multiple subsets of $n_{k,c}$ features to a single client and computing the empirical covariance of their means, one could recover $\Sigma_c$, since  $\Sigma_c = n_{k,c} \mathrm{Var}[\overline{F}_{k,c}]$.

However, in federated learning data are assigned only once to each client, and there are $K$ clients in the federation, each with $n_{k,c}$ features and $n_{i,c} \neq n_{j,c}$ for $i \neq
j$. To estimate the population covariance $\Sigma_c$, we need an estimator that accounts for the contributions of all $K$ clients. In the following proposition, we propose just such an estimator.

\begin{prop}
Let $K$ be the number of clients, each with $n_{k,c}$ features, and let $C$ be the total number of classes. Let $\hat{\mu}_c = \frac{1}{N_c}\sum_{j=1}^{N_c} F^j$ be the unbiased estimator of the population mean $\mu_c$ and $N_c=\sum_{k=1}^K n_{k,c}$ be the total number of features for a single class. Assuming the features for class $c$ are iid across clients at initialization, the estimator
\begin{equation}
\hat{\Sigma}_c = \frac{1}{K-1} \sum_{k=1}^{K} n_{k,c} (\overline{F}_{k,c} - \hat{\mu}_c)(\overline{F}_{k,c} - \hat{\mu}_c)^\top
\label{eq:cov-estimate}
\end{equation}
is an unbiased estimator of the population covariance $\Sigma_c$, for all $c \in 1, \ldots, C$.
\end{prop}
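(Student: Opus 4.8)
The plan is to compute $\E[\hat{\Sigma}_c]$ directly and show it equals $\Sigma_c$, using the two identities in \cref{eq:known-res} together with independence of the client sample means. Throughout I fix a class $c$ and lighten notation by writing $\overline{F}_k$, $n_k$, $N=N_c$, $\mu=\mu_c$ and $\Sigma=\Sigma_c$. First I would observe that the global estimator $\hat{\mu}_c=\frac{1}{N}\sum_{j=1}^{N}F^j$ is exactly the size-weighted average of the client means, $\hat{\mu}_c=\frac{1}{N}\sum_{k=1}^{K}n_k\overline{F}_k$, because the features of class $c$ are partitioned across clients. This lets me regard the entire expression as a function of the $K$ independent random vectors $\overline{F}_1,\dots,\overline{F}_K$.

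Next I would perform a purely algebraic reduction mirroring the classical bias computation for the sample variance. Introducing the centered vectors $D_k=\overline{F}_k-\mu$ and their weighted average $\bar{D}=\frac{1}{N}\sum_k n_k D_k=\hat{\mu}_c-\mu$, each deviation rewrites as $\overline{F}_k-\hat{\mu}_c=D_k-\bar{D}$. Expanding the weighted sum of outer products and using $\sum_k n_k D_k=N\bar{D}$ to collapse the terms involving $\bar{D}$, I expect to reach the deterministic identity
\begin{equation}
\sum_{k=1}^{K} n_k (\overline{F}_k-\hat{\mu}_c)(\overline{F}_k-\hat{\mu}_c)^\top = \sum_{k=1}^{K} n_k D_k D_k^\top - N\,\bar{D}\bar{D}^\top,
\end{equation}
which holds before invoking any distributional assumption.

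Then I would take expectations of the two terms separately. For the first, \cref{eq:known-res} gives $\E[D_k D_k^\top]=\Var[\overline{F}_k]=\Sigma/n_k$, so $\E[\sum_k n_k D_k D_k^\top]=\sum_k n_k(\Sigma/n_k)=K\Sigma$. For the second, I would use that the $\overline{F}_k$ are independent across clients: since the $D_k$ have mean zero and are mutually independent, the off-diagonal expectations $\E[D_i D_j^\top]$ vanish for $i\neq j$, leaving $\E[\bar{D}\bar{D}^\top]=\frac{1}{N^2}\sum_k n_k^2\,\E[D_k D_k^\top]=\frac{1}{N^2}\sum_k n_k\Sigma=\Sigma/N$, hence $N\,\E[\bar{D}\bar{D}^\top]=\Sigma$. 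Combining the two contributions gives $K\Sigma-\Sigma=(K-1)\Sigma$, and dividing by $K-1$ yields $\E[\hat{\Sigma}_c]=\Sigma_c$.

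The computation itself is routine; the only substantive use of the hypotheses, and the point I would treat most carefully, is the expectation of the correction term $\bar{D}\bar{D}^\top$. Independence across clients is precisely what annihilates the off-diagonal contributions and produces the exact $\Sigma/N$ that cancels one copy of $\Sigma$, giving rise to the Bessel-type divisor $K-1$. I would therefore flag explicitly that the assumption ``iid across clients at initialization'' is invoked only there, and note that the weighting by $n_k$ is what makes both terms scale so that the individual sample sizes cancel, leaving a bias that is independent of how the data were split across clients.
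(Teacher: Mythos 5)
Your proof is correct and takes essentially the same route as the paper's (Appendix C): both arguments reduce the claim to the moment identities $\E[\overline{F}_{k,c}]=\mu_c$, $\Var[\overline{F}_{k,c}]=\Sigma_c/n_{k,c}$ and $\Var[\hat{\mu}_c]=\Sigma_c/N_c$, and then verify by routine algebra that the weighted sum of outer products has expectation $(K-1)\Sigma_c$. The only differences are organizational: you center at $\mu_c$ first via the deterministic identity $\sum_k n_{k,c}(\overline{F}_{k,c}-\hat{\mu}_c)(\overline{F}_{k,c}-\hat{\mu}_c)^\top=\sum_k n_{k,c}D_kD_k^\top-N_c\,\bar{D}\bar{D}^\top$ and obtain $\Var[\hat{\mu}_c]=\Sigma_c/N_c$ from cross-client independence of the $\overline{F}_{k,c}$, whereas the paper expands the uncentered second moments (tracking the $\mu_c\mu_c^\top$ terms, which cancel) and gets the same variance by viewing the pooled class features as a single iid sample of size $N_c$ --- equivalent justifications under the stated iid hypothesis.
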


\textit{Proof. }To prove that $\hat{\Sigma}_c$ is an unbiased estimator of the population covariance, we show that $\E[\hat{\Sigma}_c] = \Sigma_c$. Under the iid assumption of client feature distribution with a frozen pre-trained model, the class features of each client can be considered as a random sample of size $n_{k,c}$, and the global class features as a sample of size $N_c$. By exploiting the result in  \Cref{eq:known-res},  each client class mean has $\E[\overline{F}_{k,c}] = \mu_c$ and $\Var[\overline{F}_{k,c}] = \frac{\Sigma_c}{n_{k,c}}$, while the global class mean $\hat{\mu}_c$ has $\E[\hat{\mu}_c] = \mu_c$ and $\Var[\hat{\mu}_c] = \frac{\Sigma_c}{N_c}$. Using this fact and applying the properties of expectation to $\E[\hat{\Sigma}_c]$, we complete the proof. In~\cref{app-proof2} we provide the detailed proof.

\begin{figure*}
    \centering
    \includegraphics[width=0.93\textwidth]{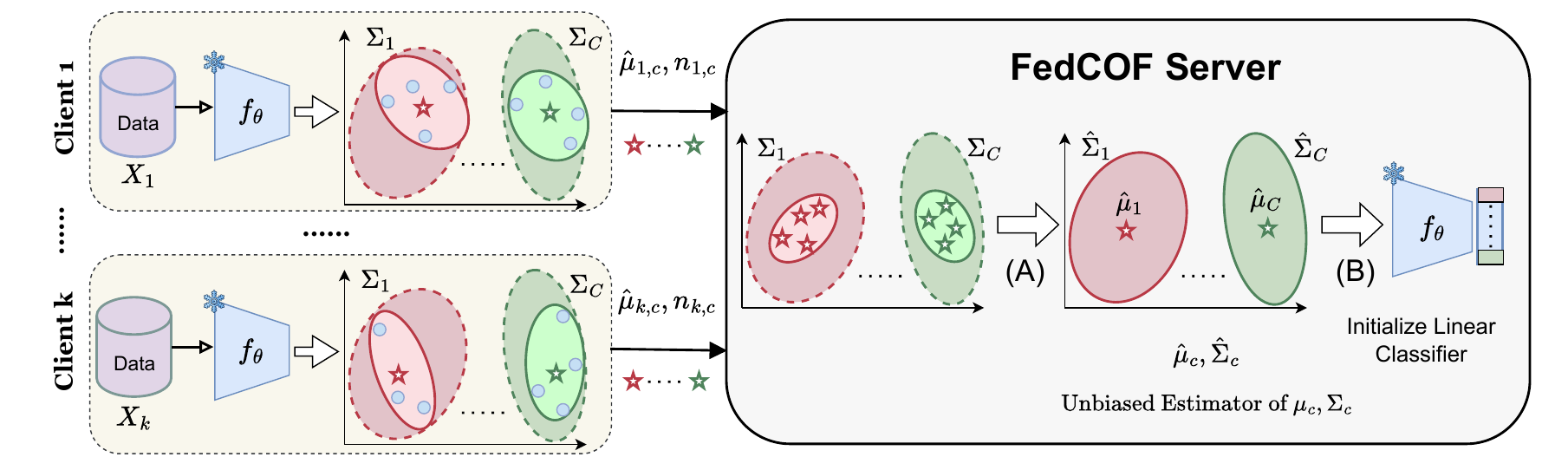}
    % \vspace{-5pt}
    \caption{\textbf{Federated Learning with COvariances for Free (FedCOF)}. Each client $k$ communicates only its class means $\hat{\mu}_{k,c}$ and counts $n_{k,c}$.  
    On the server side, (A) we use a provably unbiased estimator $\hat{\Sigma}_c$ (denoted by solid lines) of population covariance $\Sigma_c$ (denoted by dashed lines) based on the received class means (see~\cref{sec:4.2}). (B) We initialize the linear classifier using the estimated second-order statistics and remove the between-class scatter matrix as discussed in~\cref{sec:4.3}.
     } 
    \label{fig:method}
    %\vspace{-10pt}
\end{figure*}

\minisection{Covariance shrinkage.} Shrinkage\cite{nessOn, friedman1989regularized} stabilizes covariance estimation by adding a scaled identity matrix to the covariance matrices, which regularizes small eigenvalues and reduces estimator variance. This is particularly helpful when the number of samples is smaller than the feature dimensionality, leading to low-rank covariances, and has been recently adopted in continual learning methods leveraging feature covariances~\citep{goswami2024calibrating,magistri2024elastic}.
In the FL setup, the covariance estimation using a limited number of clients may poorly estimate the population covariance $\Sigma_c$.
So, we perform shrinkage to better estimate the class covariances from the client means as follows:
\begin{equation}
\hat{\Sigma}_c = \frac{1}{K-1} \sum_{k=1}^{K} n_{k,c} (\hat{\mu}_{k,c} - \hat{\mu}_c)(\hat{\mu}_{k,c} - \hat{\mu}_c)^\top + \gamma I_d,
\label{eq:final-cov-estimate}
\end{equation}
where $\hat{\mu}_{k,c}=\overline{F}_{k,c}$ represents a realization of client means and $\gamma > 0$ is the shrinkage factor. 

\minisection{Impact of number of clients.} The quality of estimated covariances depends on the number of clients.
More clients will give more means and improve the estimate compared to fewer clients.
While realistic settings have thousands of clients~\citep{hsu2020federated,kairouz2021advances}, there can be FL settings with fewer. In such cases, we propose to sample multiple means from each client to increase number of means used for covariance estimation. This can be done by randomly sampling subsets of features in each client
and computing a mean from each subset. We validate this approach experimentally (see \cref{fig:ablation}). We discuss more on sampling multiple class means in~\cref{app-sampling}.

%\vspace{-2pt}

\subsection{Classifier Initialization with Estimated Covariances}
\label{sec:4.3}
Having derived how to compute class covariances from client means, we now describe how FedCOF leverages the estimated class covariances to initialize classifier weights.

\begin{prop}
Let $F \in \mathbb{R}^{d \times N}$ be a feature matrix with empirical global mean $\hat{\mu}_g \in \mathbb{R}^{d}$, and $Y \in \mathbb{R}^{N \times C}$ be a label matrix. The optimal ridge regression solution $W^* = (G + \lambda I_d)^{-1} B$, where $B \in \mathbb{R}^{d \times C}$ and $G \in \mathbb{R}^{d \times d}$ can be written in terms of class means and covariances as follows:
\begin{equation}
  B = \left[ \hat{\mu}_c N_c \right]_{c=1}^{C}, \quad
     G = \sum_{c=1}^{C} (N_c-1) \hat{S}_c + \sum_{c=1}^C N_c (\hat{\mu}_c  - \hat{\mu}_g) (\hat{\mu}_c  - \hat{\mu}_g)^\top + N \hat{\mu}_g \hat{\mu}_g^\top,
     \label{eq:sol-with-cov}
\end{equation}
where the first two terms $G_{with} = \sum_{c=1}^{C} (N_c-1) \hat{S}_c$\; and \;$G_{btw} = \sum_{c=1}^C N_c (\hat{\mu}_c  - \hat{\mu}_g) (\hat{\mu}_c  - \hat{\mu}_g)^\top$ represents the within-class and between class scatter respectively, while $\hat{\mu}_c$, $\hat{S}_c$ and $N_c$, denote the empirical mean, covariance and sample size for class $c$, respectively.
\end{prop}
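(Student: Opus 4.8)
The plan is to leave the optimal weight formula $W^* = (G + \lambda I_d)^{-1} B$ untouched and simply re-express the two matrices $G = FF^\top$ and $B = FY$ (as defined in the Fed3R paragraph, \cref{eq:sol-ridge}) in terms of per-class means and covariances. Once $G$ and $B$ are shown to take the claimed form, the identity for $W^*$ holds verbatim, so the entire proposition reduces to two elementary algebraic rewrites with no analytic content.

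First I would dispatch $B = FY$. Since $Y$ is one-hot, its $c$-th column selects exactly the columns of $F$ whose labels equal $c$, so the $c$-th column of $B$ equals $\sum_{j : y_j = c} F^j = N_c \hat{\mu}_c$, which gives $B = [\hat{\mu}_c N_c]_{c=1}^{C}$ directly.

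The substantive part is $G$. I would write $G = FF^\top = \sum_{j=1}^N F^j (F^j)^\top$ and partition the sum by class as $G = \sum_{c=1}^C \sum_{j \in c} F^j (F^j)^\top$. For each class I apply the standard second-moment identity $\sum_{j \in c} F^j (F^j)^\top = \sum_{j \in c}(F^j - \hat{\mu}_c)(F^j - \hat{\mu}_c)^\top + N_c \hat{\mu}_c \hat{\mu}_c^\top$, whose cross terms vanish because $\sum_{j \in c}(F^j - \hat{\mu}_c) = 0$. Recognizing the first sum as $(N_c - 1)\hat{S}_c$ yields $G = \sum_c (N_c - 1)\hat{S}_c + \sum_c N_c \hat{\mu}_c \hat{\mu}_c^\top$. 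It then remains to convert the raw second-moment term $\sum_c N_c \hat{\mu}_c \hat{\mu}_c^\top$ into the between-class scatter $G_{btw} = \sum_c N_c (\hat{\mu}_c - \hat{\mu}_g)(\hat{\mu}_c - \hat{\mu}_g)^\top$ plus a global-mean term. Expanding $G_{btw}$ and invoking the bookkeeping identities $\sum_c N_c \hat{\mu}_c = \sum_j F^j = N \hat{\mu}_g$ and $\sum_c N_c = N$ collapses the three terms involving $\hat{\mu}_g$ into a single $-N \hat{\mu}_g \hat{\mu}_g^\top$, giving $\sum_c N_c \hat{\mu}_c \hat{\mu}_c^\top = G_{btw} + N \hat{\mu}_g \hat{\mu}_g^\top$. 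Substituting back reproduces the claimed decomposition of $G$ exactly.

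The only step that demands care is this last expansion: one must correctly track the cross and quadratic terms in $\hat{\mu}_g$ and verify that two of them combine with the factor $\sum_c N_c = N$ to leave precisely $N \hat{\mu}_g \hat{\mu}_g^\top$. A sign slip there is the most likely source of error, but there is no genuine obstacle, since the whole argument is finite-dimensional matrix algebra carrying the regularizer $\lambda I_d$ along untouched, with no invertibility or convergence issues to address.
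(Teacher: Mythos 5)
Your proof is correct, and it reaches the decomposition by a different (if closely related) route than the paper. You work bottom-up: partition $G = \sum_j F^j (F^j)^\top$ by class, apply the per-class second-moment identity $\sum_{j \in c} F^j (F^j)^\top = (N_c-1)\hat{S}_c + N_c \hat{\mu}_c \hat{\mu}_c^\top$, and then explicitly expand $\sum_c N_c(\hat{\mu}_c - \hat{\mu}_g)(\hat{\mu}_c - \hat{\mu}_g)^\top$ to show $\sum_c N_c \hat{\mu}_c \hat{\mu}_c^\top = G_{btw} + N\hat{\mu}_g\hat{\mu}_g^\top$, using $\sum_c N_c \hat{\mu}_c = N\hat{\mu}_g$. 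The paper instead works top-down: it never splits $G$ by class directly, but relates $G$ to the \emph{global} sample covariance via $\hat{S} = \frac{1}{N-1}(F - \hat{\mu}_g \mathbf{1}^\top)(F - \hat{\mu}_g \mathbf{1}^\top)^\top$, derives $G = (N-1)\hat{S} + N\hat{\mu}_g\hat{\mu}_g^\top$, and then \emph{cites} as well known the decomposition of $(N-1)\hat{S}$ into within-class plus between-class scatter. The practical difference is that your argument is fully self-contained -- you prove inline exactly the scatter decomposition the paper invokes without proof -- at the cost of the final mean-recentering bookkeeping you flag yourself; the paper's version is shorter given the borrowed lemma and makes the role of the global covariance $\hat{S}$ explicit, which connects more directly to the motivating observation that $G$ is an uncentered, unnormalized global covariance. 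Your treatment of $B = FY$ via the one-hot structure of $Y$ is identical to the paper's, and your remark that the $\lambda I_d$ regularizer and the formula for $W^*$ pass through untouched is accurate: the proposition is purely a rewrite of $G$ and $B$, with no invertibility claims needed.
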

 
\textit{Proof. }The proof is based on the observation that $G = F F^\top$ from ridge regression is an uncentered and unnormalized empirical global covariance. By using the empirical global covariance definition and decomposing it into  within-class and between-class scatter, we obtain the above formulation of $G$. In~\cref{app-proof3}, we provide the detailed proof.

\begin{wraptable}{r}{6cm}
% \begin{table}[t]
\begin{center}
\vspace{-18pt}
\caption{Analysis showing improved accuracy by removing between-class scatter for classifier initialization in centralized setting using pre-trained SqueezeNet.}
\resizebox{0.4\textwidth}{!}{
\begin{tabular}{c|c|c|c}
\hline
Dataset & $G_{btw}$ & $G_{with}$ & Acc.($\uparrow$) \\
\hline
\rule{0pt}{2ex} 
CIFAR-100 & \Checkmark & \Checkmark & 57.1  \\
& \XSolidBrush & \Checkmark &  \textbf{57.3} \\
\hline
ImageNet-R & \Checkmark & \Checkmark &  37.6  \\
& \XSolidBrush & \Checkmark &  \textbf{38.6}  \\
\hline
CUB200 & \Checkmark & \Checkmark &  50.4 \\
& \XSolidBrush & \Checkmark &  \textbf{53.7}  \\
\hline
Stanford Cars & \Checkmark & \Checkmark &  41.4  \\
& \XSolidBrush & \Checkmark & \textbf{44.8}  \\
\hline
\end{tabular}
}
\vspace{-10pt}
\label{tab-scatter}
\end{center}
\end{wraptable}

To analyze the impact of the two scatter matrices, we first consider the centralized setting in~\cref{tab-scatter} and empirically find that using only within-class scatter matrix performs better than using total scatter matrix in~\cref{eq:sol-with-cov}. We then analyze their spectral properties via the condition number, defined for a matrix $G$ as $\mathcal{K}(G)=\lambda_{\text{max}}(G) / \lambda^{+}_{\text{min}}(G)$, where $\lambda_\text{max}(G)$ and $\lambda_\text{min}^{+}(G)$ are the largest and smallest non-zero eigenvalue, respectively. For a SqueezeNet backbone in the centralized setting, we observe that $G_{btw}$ is highly ill-conditioned, with condition numbers $\mathcal{K}(G_{btw})$ of $3.0\times10^7$, $2.5\times10^7$, $2.2\times10^7$, $1.3\times10^7$ on CUB, Cars, ImageNet-R and CIFAR-100, respectively, while $G_{with}$ is much better conditioned ($\mathcal{K}(G_{with})$: $4.5\times10^3$, $2.5\times10^4$, $8.2 \times 10^2$, $6.3 \times 10^3$).
Including $G_{btw}$ can cause numerical instability due to its poor conditioning, leading the classifier to overfit directions with small eigenvalues that capture noise or dataset-specific artifacts. This is consistent with the results in~\cref{btw-class}, where we show that using $G_{btw}$ leads to higher overfitting on the training data.

As a result, we propose to remove the between-class scatter and initialize the linear classifier at the end of the pre-trained network using the within-class covariances $\hat{\Sigma}_c$ which are estimated from client means using~\cref{eq:final-cov-estimate}, as follows:
\begin{equation}
\begin{aligned}
    W^{*} &= \hat{G}^{-1}B; \enspace \quad
    \hat{G} &= \sum_{c=1}^{C} (N_c-1) \hat{\Sigma}_c+  N \hat{\mu}_g \hat{\mu}_g^\top. 
\label{eq:classifier-init}
\end{aligned}
\end{equation}
Theoretically, we observe that a similar approach is used in Linear Discriminant Analysis~\citep{ghojogh2019linear}, which employs only within-class covariances for finding optimal weights. By removing between-class scatter, we propose a more effective classifier initialization than Fed3R (which uses $G$ from~\cref{eq:sol-with-cov} and considers both within- and between-class scatter matrices). 

We summarize in~\cref{alg:FedCOF} how we estimate the covariance matrix for each class using only the client means and use the estimated covariances to initialize the classifier as in~\cref{eq:classifier-init}. The normalization of the weights accounts for class imbalance in the entire dataset.

\minisection{Impact of the iid assumption}. FedCOF initializes the classifier with the class covariance estimator (Equation~\ref{eq:final-cov-estimate}), unbiased only under the assumption of \textit{iid pre-trained features per class} (Section~\ref{sec:4.2}). In FL each client has its own data, typically distributed in a statistically heterogeneous or class-imbalanced manner according to a Dirichlet distribution~\citep{hsu2019measuring}. As a result, each client has data from different set of classes, resulting in non-iid data distributions across clients. However, note that the samples belonging to the same class in different clients are sampled from the same distribution. We exploit this fact in FedCOF. We later show empirically that our method can be successfully applied to non-iid FL scenarios involving thousands of heterogeneous clients on iNaturalist-Users-120K~\citep{hsu2020federated}. 
We hypothesize that this can be attributed to the strong generalization capabilities of pre-trained models.
We analyze the bias of the estimator under non-iid assumptions for the same class and also evaluate FedCOF in \textit{feature shift} settings~\citep{lifedbn} in~\cref{app-estimatorbias}.

\minisection{FedCOF in multiple rounds.} 
While the proposed estimator requires class means from all clients in a single round, this might not be realistic in settings in which clients appear in successive rounds based on availability. For multi-round classifier initialization (see FedCOF in~\cref{fig:ft} before fine-tuning), the server uses all class means and counts received from all clients seen up to the current round and stores the accumulated means and counts for future rounds. As a result, FedCOF uses statistics from all clients seen up to the current round, similar to Fed3R. This can be easily implemented by ensuring on the client side that each client transfers statistics to the server only once, avoiding repeated transfer of statistics. FedCOF converges when all clients are seen at least once, with total communication cost equal to the single round-case (see~\cref{app-comm} for details on communication cost).

\minisection{Privacy aspects.} FedCOF requires each client to send its class means and frequencies (see Eq.~\ref{eq:cov-estimate}), similar to CCVR~\citep{luo2021no}. While this avoids sharing feature-level data, it does not guarantee strong privacy and may introduce concerns, particularly because it complicates the secure aggregation protocol applicable in Fed3R. Nonetheless, as discussed in~\cref{app-privacy}, FedCOF can be combined with additional privacy-preserving mechanisms, such as adding noise to the shared statistics or adapting the method to support secure aggregation. The latter preserves FedCOF's performance but increases communication overhead to the level of Fed3R.

\begin{algorithm}[t]
\caption{{FedCOF: Federated Learning with COvariances for Free}}
\label{alg:FedCOF}
\begin{small} % Reduce font size
\begin{minipage}[t]{0.42\textwidth}
% \rule{0pt}{0.1ex} 
\begin{algorithmic}  
\STATE  \textbf{\textcolor{blue}{Client-Side (Client $k$)}}:
\STATE \textbf{Input:} $C$: set of all classes, $f_\theta$: pre-trained model, $X_{k,c}$: samples of class $c$ in client $k$, $n_{k,c}$: number of samples in $X_{k,c}$
\FOR{$c=1$ to $C$}
    \STATE $\hat{\mu}_{k,c} = \frac{1}{n_{k,c}} \underset{x \in X_{k,c}}{\sum} f_\theta(x)$ 
\ENDFOR
\STATE \textbf{Send} the class means $\hat{\mu}_{k,c}$ and sample counts $n_{k,c}$ to the Server
\end{algorithmic}
\end{minipage}
\begin{minipage}[t]{0.56\textwidth}
\begin{algorithmic}
\STATE \textbf{\textcolor{blue}{Server-Side:}}
\STATE \textbf{Input:} $\hat{\mu}_{k,c}, n_{k,c}$ sent from $K$ clients,  $\gamma > 0$ 
\FOR{$c=1 \ldots C$}
  \STATE $\hat{\mu}_c = \frac{1}{N_c} 
\sum_{k=1}^{K} n_{k,c} \hat{\mu}_{k,c};$  $N_c = \sum_{k=1}^{K} n_{k,c}$ 
% \textcolor{gray}{\# class mean}
    \STATE $\hat{\Sigma}_c \!=\! \frac{1}{K-1} \underset{k=1}{\overset{K}\sum} n_{k,c} (\hat{\mu}_{k,c} - \hat{\mu}_c)(\hat{\mu}_{k,c} - \hat{\mu}_c)^\top\! + \gamma I_d,$ Eq.(\ref{eq:final-cov-estimate})
\ENDFOR
\STATE  $\hat{\mu}_g = \frac{1}{N} \sum_{c=1}^{C} N_c \hat{\mu}_c$,  \quad \, \,\, $N=\sum_{c=1}^C N_c$  \,\,\,\, 
% \textcolor{gray}{\# global mean}
\STATE  $B = \left[ \hat{\mu}_c N_c \right]_{c=1}^{C},$ \quad
% Eq.(\ref{eq:sol-with-B})
$\hat{G} = \sum_{c=1}^{C} (N_c - 1) \hat{\Sigma}_c + N \hat{\mu}_g \hat{\mu}_g^\top $
\STATE $W^* = \hat{G}^{-1} B ,$ Eq. (\ref{eq:classifier-init})  %\in \mathbb{R}^{d \times C}
\STATE \textbf{Normalize}  $W^*$: $W^*_c \leftarrow W^*_c / \| W^*_c \| \quad  c=1,\ldots, C$
% \STATE \textbf{Send} $W^*$ to the $K$ clients.
\end{algorithmic}
\end{minipage}
\end{small}
\end{algorithm}
% \vspace{-15pt}

% \vspace{-5pt}
\section{Experiments}
\label{sec:experiments}

%\vspace{-5pt}
\noindent\textbf{Datasets.}
We evaluate FedCOF on multiple datasets namely CIFAR-100~\citep{krizhevsky2009learning}, ImageNet-R~\citep{hendrycks2021many} (IN-R), CUB200~\citep{wah2011caltech}, Stanford Cars~\citep{krause20133d} and iNaturalist~\citep{van2018inaturalist}. 
We distribute the first 4 datasets to 100 clients using a highly heterogeneous Dirichlet distribution $(\alpha=0.1)$ following standard practice~\citep{hsu2019measuring,legate2023guiding}.
We also use real-world non-iid FL benchmark of iNaturalist-Users-120K~\citep{hsu2020federated} (iNat-120K) having 1203 classes across 9275 clients. We discuss the dataset details in~\cref{app-details}.

\noindent\textbf{Implementation details.}
We use three models: namely SqueezeNet~\citep{iandola2016squeezenet} following~\cite{legate2023guiding} and \cite{nguyen2022begin}, MobileNetV2~\citep{sandler2018mobilenetv2}   following~\cite{fani2024accelerating,hsu2020federated}, and ViT-B/16~\citep{dosovitskiy2021an}. All models are pre-trained on ImageNet-1k~\citep{deng2009imagenet}. 
We use the FLSim library.
We use $\gamma = 1$ for all experiments with SqueezeNet and ViT-B/16, and $\gamma = 0.1$ for all experiments with MobileNetV2 due to very high dimensionality $d$ of the feature space. 
We compare to \textit{FedCOF Oracle} in which real class covariances are shared from clients and aggregated in server instead of using our estimated covariances (see~\cref{app-details}). 
For all experiments, we set the client participation in each round to 30\%, and we show the training-free methods in multiple rounds in~\cref{fig:ft,fig:lp}.
We provide more implementation details in~\cref{app-details}. We discuss computation of communication costs for all methods in~\cref{app-comm}.

\begin{table*}[t]
\begin{center}
\caption{Evaluation of different \textit{training-free methods} using 100 clients for first four datasets and 9275 pre-defined clients on iNat-120K across 5 random seeds. We show the total communication cost (in MB) from all clients to server. We also show the FedCOF oracle in which full class covariances are shared from clients to server. Best results from each section in \textbf{bold}.}
% \vspace{-5pt}
\resizebox{0.97\textwidth}{!}{
\begin{tabular}{cc|cc|cc|cc}
\hline
 & & \multicolumn{2}{c|}{SqueezeNet ($d=512$)} & \multicolumn{2}{c|}{MobileNetv2 ($d=1280$)} & \multicolumn{2}{c}{ViT-B/16 ($d=768$)} \\
 & Method & Acc ($\uparrow$) & Comm. ($\downarrow$) & Acc ($\uparrow$) & Comm. ($\downarrow$) & Acc ($\uparrow$) & Comm. ($\downarrow$) \\
           % \midrule
           \hline
\multirow{4}{*}{\rotatebox[origin=c]{90}{\textbf{CIFAR-100}}} 

\rule{0pt}{2ex} 
& FedNCM~\citep{legate2023guiding}  & 41.5$\pm$0.1 & \textbf{5.9}  & 55.6$\pm$0.1 & \textbf{14.8} & 55.2$\pm$0.1 & \textbf{8.9} \\
& Fed3R~\citep{fani2024accelerating} & \textbf{56.9}$\pm$0.1 & 110.2 & 62.7$\pm$0.1 & 670.1 & \textbf{73.9}$\pm$0.1 & 244.8 \\
& \chl FedCOF (Ours) & \chl56.1$\pm$0.2 & \chl\textbf{5.9} & \chl\textbf{63.5}$\pm$0.1 & \chl\textbf{14.8} & \chl73.2$\pm$0.1 & \chl\textbf{8.9}  \\ 
\cmidrule{2-8}
& FedCOF Oracle (Full Covs) & 56.4$\pm$0.1 & 3015.3  & 63.9$\pm$0.1 & 18823.5 & 73.8$\pm$0.1 & 6780.0 \\

\hline

\rule{0pt}{2ex} 
\multirow{4}{*}{\rotatebox[origin=c]{90}{\textbf{IN-R}}} 
& FedNCM~\citep{legate2023guiding} & 23.8$\pm$0.1 & \textbf{7.1} & 37.6$\pm$0.2 & \textbf{17.8} & 32.3$\pm$0.1 & \textbf{10.7}  \\
& Fed3R~\citep{fani2024accelerating} & 37.6$\pm$0.2 & 111.9 & 46.0$\pm$0.3 & 673.1 & \textbf{51.9}$\pm$0.2 & 246.6 \\
& \chl FedCOF (Ours) & \chl\textbf{37.8}$\pm$0.4 & \chl\textbf{7.1} & \chl\textbf{47.4}$\pm$0.1 & \chl\textbf{17.8} & \chl51.8$\pm$0.3 & \chl\textbf{10.7}  \\ 
\cmidrule{2-8}
& FedCOF Oracle (Full Covs) & 38.2$\pm$0.1 & 3645.7 & 48.0$\pm$0.3 & 22758.8 & 52.7$\pm$0.1 & 8197.4 \\

\hline
\rule{0pt}{2ex} 
\multirow{4}{*}{\rotatebox[origin=c]{90}{\textbf{CUB200}}} 
& FedNCM~\citep{legate2023guiding} & 37.8$\pm$0.3 & \textbf{4.8} & 58.3$\pm$0.3 & \textbf{12.0} & 75.7$\pm$0.1 & \textbf{7.2}  \\
& Fed3R~\citep{fani2024accelerating} & 50.4$\pm$0.3 & 109.6 & 58.6$\pm$0.2 & 667.3 & 77.7$\pm$0.1 & 243.1  \\
& \chl FedCOF (Ours) & \chl\textbf{53.7}$\pm$0.3 & \chl\textbf{4.8} & \chl\textbf{62.5}$\pm$0.4 & \chl\textbf{12.0} & \chl\textbf{79.4}$\pm$0.2 & \chl\textbf{7.2}  \\ 
\cmidrule{2-8}
& FedCOF Oracle (Full Covs) & 54.4$\pm$0.1 & 2472.1 & 63.1$\pm$0.5 & 15432.7 & 79.6$\pm$0.2 & 5558.6  \\

\hline
\rule{0pt}{2ex} 
\multirow{4}{*}{\rotatebox[origin=c]{90}{\textbf{Cars}}} 
& FedNCM~\citep{legate2023guiding} & 19.8$\pm$0.2 & \textbf{5.4} & 30.0$\pm$0.1 & \textbf{13.5} & 26.2$\pm$0.4 & \textbf{8.1} \\
& Fed3R~\citep{fani2024accelerating} & 39.9$\pm$0.2 & 110.2 & 41.6$\pm$0.1 & 668.8 & 47.9$\pm$0.3 & 244.0 \\
& \chl FedCOF (Ours) & \chl\textbf{44.0}$\pm$0.3 & \chl\textbf{5.4} & \chl\textbf{47.3}$\pm$0.5 & \chl\textbf{13.5} & \chl\textbf{52.5}$\pm$0.3 & \chl\textbf{8.1} \\ 
\cmidrule{2-8}
& FedCOF Oracle (Full Covs) & 44.6$\pm$0.1 & 2767.3 & 47.2$\pm$0.3 & 17275.7 & 53.1$\pm$0.1 & 6222.5 \\

\hline
\rule{0pt}{2ex} 
\multirow{4}{*}{\rotatebox[origin=c]{90}{\textbf{iNat-120K}}} 
& FedNCM~\citep{legate2023guiding} & 21.2$\pm$0.1 & \textbf{111.8} & 36.0$\pm$0.1 & \textbf{279.5} & 53.9$\pm$0.1 & \textbf{167.7} \\
& Fed3R~\citep{fani2024accelerating} & 32.1$\pm$0.1 & 9837.3 & 41.5$\pm$0.1 & 61064.1 & 62.5$\pm$0.1 & 22050.2 \\
& \chl FedCOF (Ours) & \chl\textbf{32.5}$\pm$0.1 & \chl\textbf{111.8} & \chl\textbf{44.1}$\pm$0.1 & \chl\textbf{279.5} & \chl\textbf{63.1}$\pm$0.1 & \chl\textbf{167.7} \\ 
\cmidrule{2-8}
& FedCOF Oracle (Full Covs) & 32.4$\pm$0.1 & 57k & 43.6$\pm$0.1 & 358k & 62.9$\pm$0.1 & 128k \\

\hline
\end{tabular}
}
\vspace{-15pt}
\label{table1}
\end{center}
\end{table*}

% \vspace{-5pt}
\minisection{Evaluation for different training-free methods.}
We compare the performance of existing training-free methods and the proposed method in~\cref{table1} using pre-trained Squeezenet, Mobilenetv2 and ViT-B/16 models. We observe that Fed3R~\citep{fani2024accelerating} using second-order statistics outperforms FedNCM~\citep{legate2023guiding} significantly ranging from 0.3\% to 21\% across all datasets. However, Fed3R requires a higher communication cost compared to FedNCM. In real-world iNat-120K benchmark, Fed3R needs $61k$ MB compared to $280$ MB for FedNCM (see~\cref{fig:intro}), which is 218 times higher. FedCOF performs better than Fed3R in most settings despite having the same communication cost as FedNCM. FedCOF achieves similar performance as the oracle setting using aggregated class covariances requiring very high communication, which validates the effectiveness of the proposed covariance estimator.

FedCOF maintains similar accuracy with Fed3R on CIFAR-100 and ImageNet-R, with an improvement of about 1\% when using MobileNetv2. FedCOF outperforms Fed3R on CUB200 and Cars. On CUB200, FedCOF outperforms Fed3R by 3.3\%, 3.9\% and 2.2\% using SqueezeNet, MobileNetv2 and ViT-B/16 respectively. FedCOF improves over Fed3R in the range of 4.1\% to 5.7\% on Cars. On iNat-120K, FedCOF improves over Fed3R by 0.4\%, 2.6\% and 0.6\% using different models.
When comparing FedCOF with FedNCM -- both with equal communication costs and same strategy in clients -- one can observe that the usage of second order statistics derived only from the class means of clients leads to large performance gains, e.g. 24\% using SqueezeNet and 26\% using ViT-B/16 on Cars, above 8\% using all architectures on large-scale iNat-120K. We also adapt CCVR~\citep{luo2021no} for classifier initialization in~\cref{app-more-exp} and show that FedCOF outperforms CCVR.

\minisection{Comparison with communication-efficient methods.} We consider the recent Probabilistic Federated Prompt-Tuning (PFPT) approach that aggregates trainable prompt parameters from clients and tunes them in a federated manner while keeping the backbone fixed~\citep{weng2024probabilistic}. We compare PFPT, FedAvg-PT, and FedProx-PT (prompt-tuning variants of FedAvg and FedProx) with the proposed FedCOF.
Similar to PFPT~\citep{weng2024probabilistic}, we use a pre-trained ViT-B/32, assign class samples to clients using a Dirichlet distribution ($\alpha=0.1$), and use 3 random seeds. We use the same training hyperparameters as PFPT. We show in~\cref{pfpt-table} that FedCOF is much more communication efficient compared to PFPT and achieves higher performance without any training. We also observe that prompt-tuning methods perform poorly on fine-grained datasets. We discuss more on this in~\cref{app-comm}.

\begin{table}[H]
\begin{center}
\caption{Comparison of FedCOF with federated prompt-tuning methods using ViT-B/32.}
\vspace{3pt}
\resizebox{0.99\textwidth}{!}{
\begin{tabular}{c|cc|cc|cc|cc}
\hline
 & \multicolumn{2}{c|}{CIFAR-100} & \multicolumn{2}{c|}{IN-R} & \multicolumn{2}{c|}{CUB200} & \multicolumn{2}{c}{Cars} \\
 Method & Acc ($\uparrow$) & Comm. ($\downarrow$) & Acc ($\uparrow$) & Comm. ($\downarrow$) & Acc ($\uparrow$) & Comm. ($\downarrow$) & Acc ($\uparrow$) & Comm. ($\downarrow$) \\
 \hline
FedAvg-PT~\citep{weng2024probabilistic} & 74.5$\pm$0.5 & 884.7 & 47.6$\pm$1.3 & 1622.0 & 37.0$\pm$2.0 & 1622.0 & 13.6$\pm$1.7 & 1592.5 \\
FedProx-PT~\citep{weng2024probabilistic} & 73.6$\pm$0.4 & 884.7 & 47.9$\pm$0.5 & 1622.0 & 38.5$\pm$0.8 & 1622.0 & 13.7$\pm$1.5 & 1592.5 \\
PFPT~\citep{weng2024probabilistic} & 75.1$\pm$0.5 & 846.5 & 50.7$\pm$0.2 & 1794.4 & 38.6$\pm$0.9 & 1765.5 & 12.9$\pm$1.1 & 1736.1 \\
\chl FedCOF (Ours) & \chl\textbf{75.3}$\pm$0.1 & \chl\textbf{8.9} & \chl\textbf{54.9}$\pm$0.2 & \chl\textbf{10.7} & \chl\textbf{65.0}$\pm$0.1 & \chl\textbf{7.2} & \chl\textbf{50.4}$\pm$0.1 & \chl\textbf{8.1} \\
\hline
\end{tabular}
}
\vspace{-10pt}
\label{pfpt-table}
\end{center}
\end{table}

\minisection{Comparison with full fine-tuning methods.}
\label{compare_ft}
% \vspace{-5pt}
We compare training-free methods with FL baselines like FedAvg and FedAdam with randomly initialized classifier and pre-trained backbone in~\cref{table_ft_sqnet}. We use adaptive optimizer, FedAdam~\citep{reddi2020adaptive} since it performs better than most other optimizers as shown in~\cite{nguyen2022begin}. 
Without any training, FedCOF outperforms FedAvg in all settings and FedAdam by 7.3\% on CUB200 and 2.2\% on Cars, and achieves competitive performance in ImageNet-R. We show in~\cref{fig:ft} how FedCOF starts from a very high accuracy compared to FedAdam and further improves on fine-tuning. We provide more experiments with pre-trained ResNet-18 in~\cref{app-more-exp}.

\begin{figure*}
    \centering
    \resizebox{0.96\textwidth}{!}{
    {\includegraphics[width=0.32\textwidth]{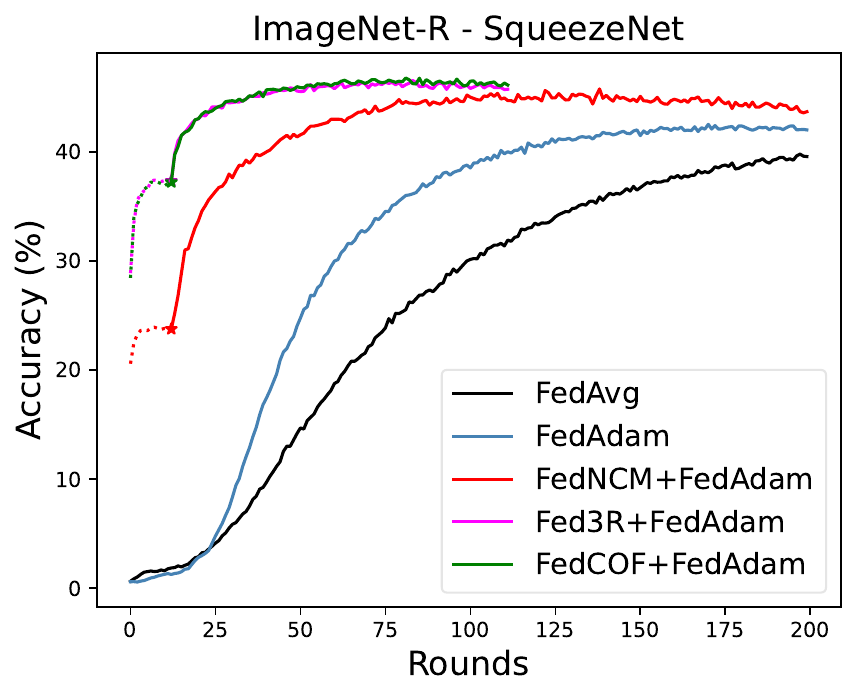}}
    \hfill
    {\includegraphics[width=0.32\textwidth]{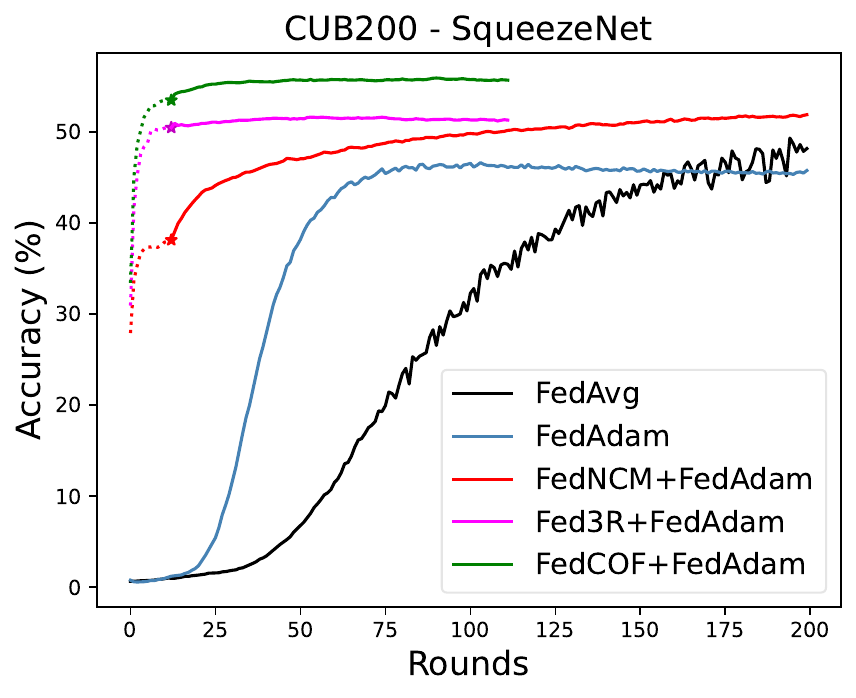}}
    \hfill
    {\includegraphics[width=0.32\textwidth]{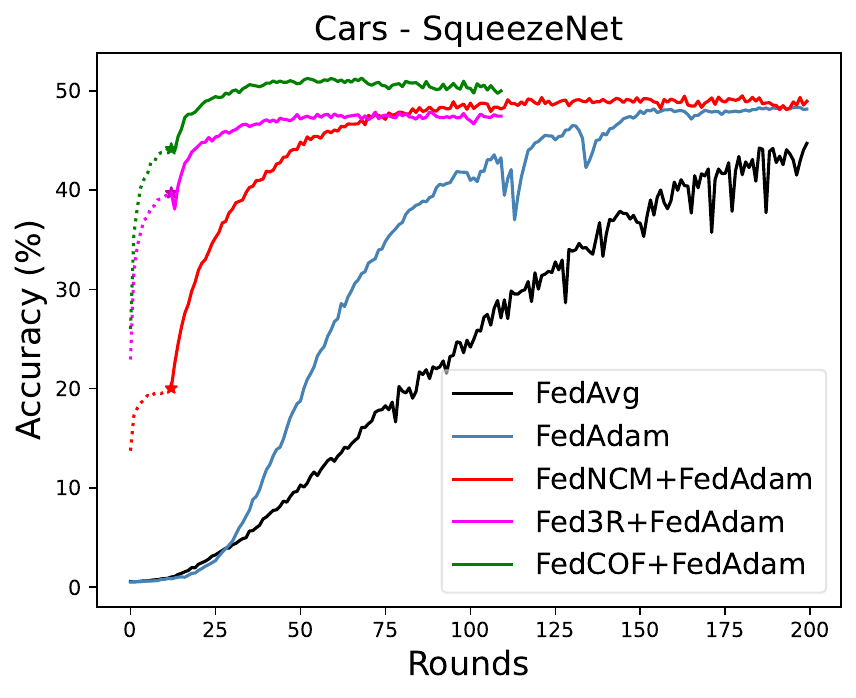}}
    }
    % \vspace{-5pt}
    \caption{Performance comparison when initializing with different methods and fine-tuning with FedAdam~\citep{reddi2020adaptive} and FedAvg~\citep{mcmahan2017communication}. We also compare with FedAdam and FedAvg using a pre-trained backbone and random classifier initialization. The training-free initialization stages are shown as dotted lines and stars represents the start of fine-tuning stages. Accuracies are averaged over 3 seeds.}
    \label{fig:ft}
    \vspace{-15pt}
\end{figure*}

\begin{wraptable}{r}{7.2cm}
\begin{center}
\vspace{-18pt}
% \vspace{5pt}
% \vspace{-5pt}
\caption{Comparison with training-based FL baselines (FedAvg and FedAdam) using pre-trained SqueezeNet. 
% For finetuning (FT) after initiliazation, we use FedAdam here. 
For training-based methods, we consider 100 rounds of training for fair comparison and report accuracy of 3 random seeds.}
\vspace{-3pt}
\resizebox{0.5\textwidth}{!}{
\begin{tabular}{c|c|ccc}
\hline
 Method & Training & ImageNet-R & CUB200 & Cars \\
\hline

\rule{0pt}{2ex} 
FedAvg & \Checkmark  & 30.0$\pm$0.6 & 30.3$\pm$6.7 & 24.9$\pm$1.6  \\
FedAdam & \Checkmark & 38.8$\pm$0.6 & 46.4$\pm$0.8 & 41.8$\pm$0.6 \\
\hline
\rule{0pt}{2ex} 
FedNCM & \XSolidBrush  & 23.8$\pm$0.1 & 37.8$\pm$0.3 & 19.8$\pm$0.2  \\
Fed3R & \XSolidBrush  & 37.6$\pm$0.2 & 50.4$\pm$0.3 & 39.9$\pm$0.2  \\
\chl FedCOF (Ours) & \chl\XSolidBrush  & \chl37.8$\pm$0.4 & \chl53.7$\pm$0.3 &  \chl44.0$\pm$0.3 \\ 
\hline
\rule{0pt}{2ex} 
FedNCM+FedAdam & \Checkmark  & 44.7$\pm$0.1 & 50.2$\pm$0.2 & 48.7$\pm$0.2 \\
Fed3R+FedAdam & \Checkmark  & 45.9$\pm$0.3 & 51.2$\pm$0.3 & 47.4$\pm$0.4 \\
\chl FedCOF+FedAdam & \chl\Checkmark  & \chl\textbf{46.0}$\pm$0.4 &  \chl\textbf{55.7}$\pm$0.4 & \chl\textbf{49.6}$\pm$0.6 \\
\hline
\end{tabular}
}
\vspace{-10pt}
\label{table_ft_sqnet}
\end{center}
\end{wraptable}

% \vspace{-5pt}
\minisection{Analysis of fine-tuning and linear probing.}
While FedCOF achieves high accuracy without training, we show in~\cref{fig:ft} that further fine-tuning the model with FL achieves better and faster convergence compared to federated optimization from scratch. We show the performance of fine-tuning after training-free classifier initialization in~\cref{fig:ft}. These training-free methods end after all clients appear at least once to share their local statistics. We fine-tune the models after FedCOF and Fed3R for 100 rounds since they achieve fast convergence, while we train for 200 rounds for FedAdam, FedAvg, and fine-tuning after FedNCM which takes longer to converge. Fine-tuning after FedCOF starts at a higher accuracy and converges faster compared to FedNCM. Although FedCOF and Fed3R converge similarly on ImageNet-R, FedCOF+FedAdam achieves better accuracy than Fed3R+FedAdam on CUB200 and Cars. We see in~\cref{table_ft_sqnet} that all training-free approaches followed by fine-tuning outperform FedAdam and FedAvg with random classifier initialization.

Following~\cite{legate2023guiding} and \cite{nguyen2022begin}, we perform federated linear probing (LP) of the models using FedAvg after classifier initialization with training-free methods. In FedAvg-LP, we perform FedAvg and learn only the classifier weights of all client models. Linear probing requires much less computation compared to fine-tuning the entire model and were found to be effective with pre-trained models. We observe in~\cref{fig:lp} that linear probing after FedCOF improves significantly compared to FedNCM and Fed3R using ViT-B/16 on Cars and SqueezeNet on iNat-120K. On the real-world dataset iNat-120K, FedAvg-LP with random classifier initialization achieves 27.3\% after 5000 rounds while FedCOF+FedAvg-Lp achieves 34\% in less than 1000 rounds. We plot accuracy versus communication in~\cref{fig:lp} (center) to demonstrate the advantage of FedCOF over other methods. We discuss more in~\cref{app-more-exp}.

\minisection{Impact of number of clients and data heterogeneity.} We analyze in~\cref{fig:ablation} (left), the performance of FedCOF with varying number of clients and data heterogeneity. We observe that the performance of FedCOF improves with increasing number of clients and decreasing heterogeneity. This is due to the fact that more clients provides more class means and more uniform data distribution gives better representative local means. While more clients are favourable for FedCOF, it still performs well and outperforms FedNCM significantly in the setting with 10 clients and high data heterogeneity.

\minisection{Multiple class means per client.} 
We analyze FL settings with fewer clients 
ranging from 10 to 50 
in~\cref{fig:ablation} (center) and show that sharing multiple class means from each client 
improves the accuracy. Using only 10 clients, sharing 2 class means per client improves the accuracy by 2.6\%.

\minisection{Impact of shrinkage.} 
We show in~\cref{fig:ablation} (right) that using shrinkage improves the covariance estimates thereby improving the accuracy. We observe that shrinkage consistently improves performance, especially when the number of sampled means is small. When the number of means is low relative to the feature dimension ($d=512$ in SqueezeNet), the covariance estimate becomes ill-conditioned, and shrinkage stabilizes it. However, as more class means are sampled per client or the number of clients increases, the benefit of shrinkage diminishes, since the total number of means 
approaches the feature dimension, making the estimate more stable.
We present more ablation studies in~\cref{app-more-ablate}.

\begin{figure*}
    \centering
    \resizebox{0.99\textwidth}{!}{
    {\includegraphics[width=0.33\textwidth]{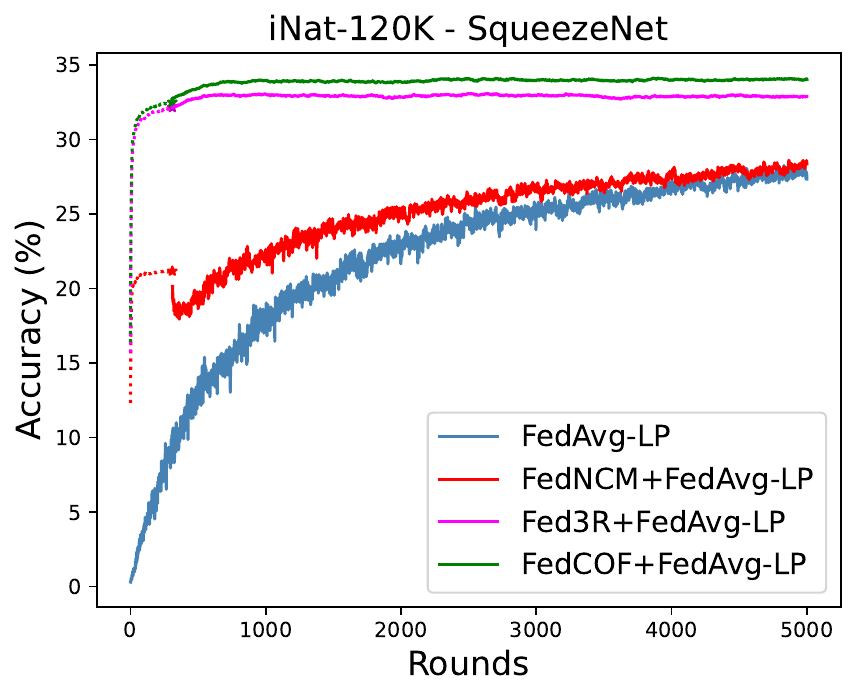}}
    \hfill
    {\includegraphics[width=0.33\textwidth]{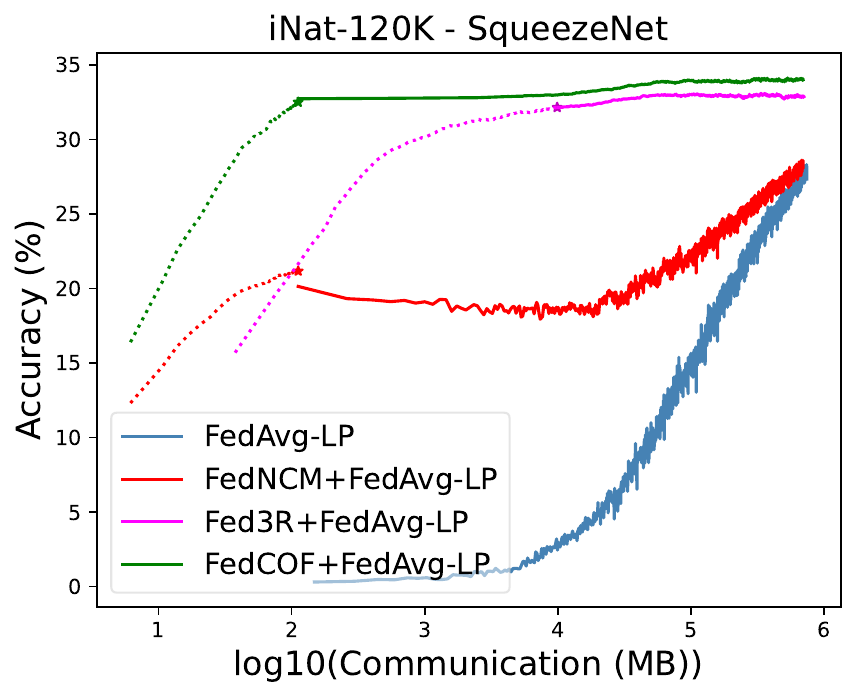}}
    \hfill
    {\includegraphics[width=0.33\textwidth]{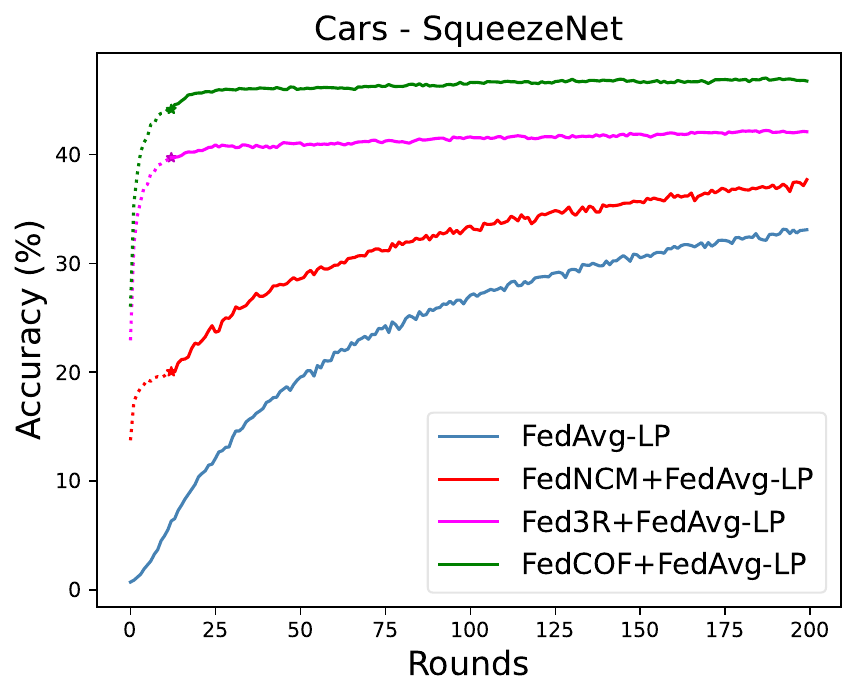}}
    }
    \vspace{-3pt}
    \caption{Performance of different classifier initialization methods when linear-probing with FedAvg~\citep{mcmahan2017communication}. FedAvg-LP (in blue) uses random classifier initialization and a pre-trained backbone. The training-free initialization stage is shown in dotted lines, stars represents the start of linear probing.}
    \label{fig:lp}
\end{figure*}

\begin{figure*}
    \centering
    \resizebox{0.99\textwidth}{!}{
    {\includegraphics[width=0.33\textwidth]{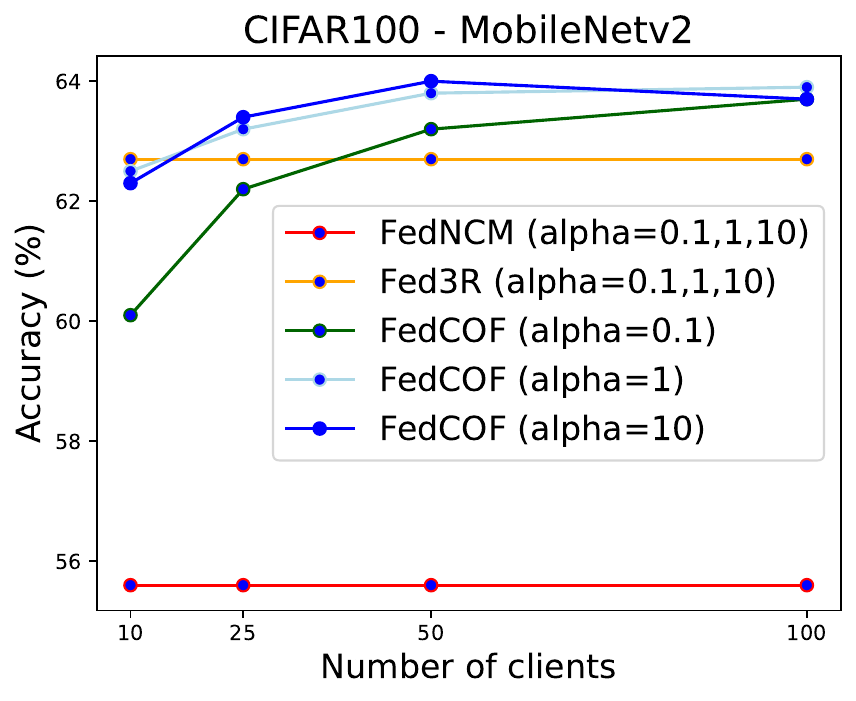}}
    \hfill
    {\includegraphics[width=0.33\textwidth]{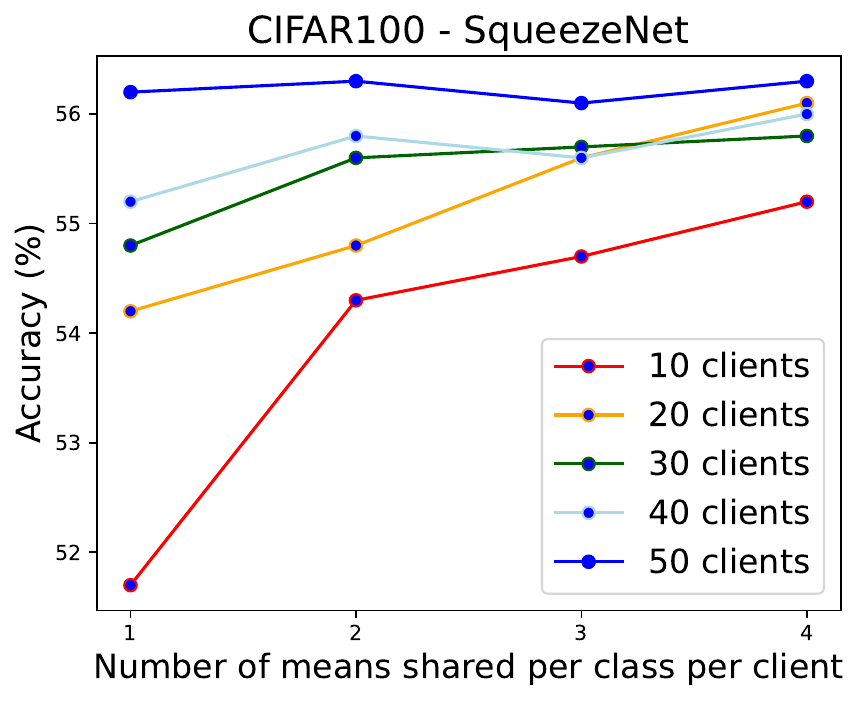}}
    \hfill
    {\includegraphics[width=0.33\textwidth]{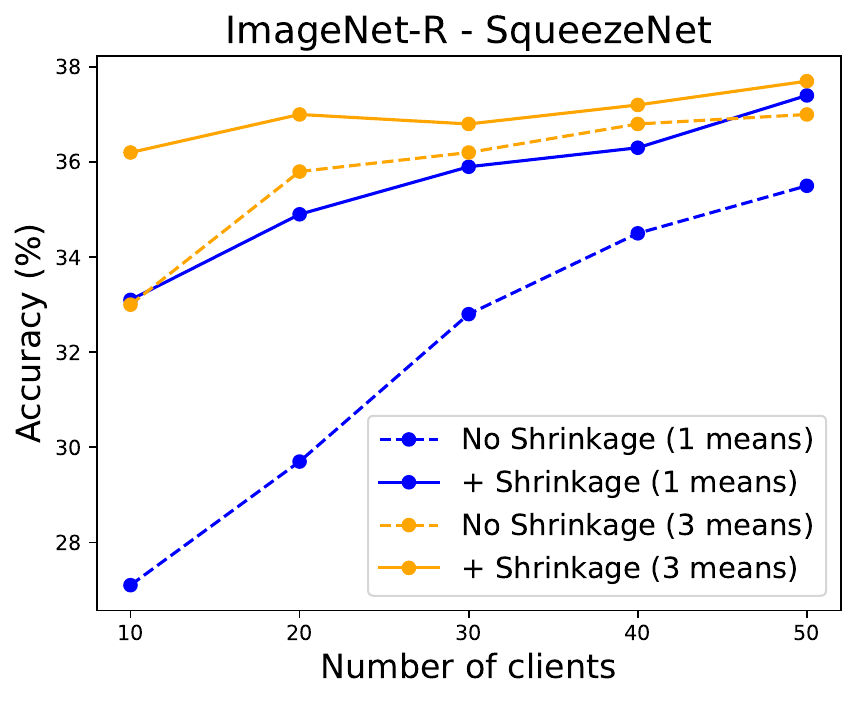}}
    }
    \vspace{-3pt}
    \caption{Ablation experiments: (left) Change in performance with varying number of clients and data heterogeneity. (center) Sharing multiple class means per client improves FedCOF performance. (right) Impact of shrinkage with varying number of clients and sampled means per client.}
    \label{fig:ablation}
    \vspace{-10pt}
\end{figure*}

\section{Conclusion}
\label{sec:conclusion}

In this work we proposed FedCOF, a novel training-free approach for federated learning with pre-trained models. By leveraging the statistical properties of client class sample means, we showed that second-order statistics can be estimated using only class means from clients, thus reducing communication costs. We derived a provably unbiased estimator of population class covariances, enabling accurate estimation of a global covariance matrix. By applying shrinkage to the estimated class covariances and removing between-class scatter matrices, the server can effectively use this global covariance to initialize a global classifier. 
Our experiments demonstrated that FedCOF outperforms FedNCM~\citep{legate2023guiding} by significant margins while maintaining the same communication cost. Additionally, FedCOF delivers competitive or even superior results to Fed3R~\citep{fani2024accelerating} across model architectures and benchmarks while substantially reducing communication costs. Finally, we showed that FedCOF outperforms federated prompt-tuning methods and serves as a more effective starting point for improving the convergence of federated fine-tuning and linear probing methods.

\noindent\textbf{Limitations.} The quality of our estimator depends on the number of clients, as shown in~\cref{fig:ablation} where using multiple class means per client
helps with fewer clients. Another limitation is the assumption that samples of the same class are iid across clients, which is, however, an assumption underlying most of federated learning. We discuss the bias in our estimator in non-iid settings in~\cref{app-estimatorbias}.

\paragraph{Acknowledgements.} We acknowledge project PID2022-143257NB-I00, financed by MCIN/AEI/10.13039/501100011033 and ERDF/EU and FSE+, funding by the European Union ELLIOT project, and the Generalitat de Catalunya CERCA Program. Bartłomiej Twardowski acknowledges the grant RYC2021-032765-I and National Centre of Science (NCN, Poland) Grant No. 2023/51/D/ST6/02846. Kai Wang acknowledges the funding from Guangdong and Hong Kong Universities 1+1+1 Joint Research Collaboration Scheme and the start-up grant B01040000108 from CityU-DG. Andrew Bagdanov acknowledges financial support from the AVIOS project, financed by the Italian Ministry of Enterprises and Made in Italy (MiSE/MIMIT). We acknowledge all reviewers, especially anonymous reviewer (Md8N) whose constructive feedback restored our belief in the review system, which had been severely shaken in earlier review rounds. 

{
\small
\bibliographystyle{plain}
\bibliography{references}
}

%%%%%%%%%%%%%%%%%%%%%%%%%%%%%%%%%%%%%%%%%%%%%%%%%%%%%%%%%%%%%%%%%%%%%%%%%%%%%%%
%%%%%%%%%%%%%%%%%%%%%%%%%%%%%%%%%%%%%%%%%%%%%%%%%%%%%%%%%%%%%%%%%%%%%%%%%%%%%%%
% APPENDIX
%%%%%%%%%%%%%%%%%%%%%%%%%%%%%%%%%%%%%%%%%%%%%%%%%%%%%%%%%%%%%%%%%%%%%%%%%%%%%%%
%%%%%%%%%%%%%%%%%%%%%%%%%%%%%%%%%%%%%%%%%%%%%%%%%%%%%%%%%%%%%%%%%%%%%%%%%%%%%%%
\clearpage

\appendix
\section*{Appendix}

\section{Scope and Summary of Notation}
These appendices provide additional information, proofs, experimental results, and analyses that complement the main paper. For clarity and convenience, here we first summarize the key notations used throughout the paper:
\begin{itemize}
\item $N$: total number of samples. 
\item $K$: number of clients.
\item $C$: number of classes. 
\item $d$: dimensionality of the feature space. 
\item $n_{k,c}$: number of samples from class $c$ assigned to client $k$. 
\item $N_c = \sum_{k=1}^K n_{k,c}$: total number of samples in class $c$.
\item $\hat{\mu}_g, \hat{\mu}_c \in \mathbb{R}^d$: \textit{empirical} global mean and class mean for class $c$, respectively.
\item $\mu_c \in \mathbb{R}^d$: \textit{population} mean of class $c$.
\item $\hat{S}_c\in \mathbb{R}^{d\times d}$: \textit{empirical} sample covariance for class $c$.
\item $\Sigma_c \in \mathbb{R}^{d\times d}$: \textit{population} covariance for class $c$.
\item $\hat{\Sigma}_c \in \mathbb{R}^{d\times d}$: our unbiased estimator of the population covariance $\Sigma_c$ employing only client means. 
\item $F \in \mathbb{R}^{d \times N}$: feature matrix, where each column $F^j \in \mathbb{R}^d$ is a feature vector, for $j=1, \ldots, N$.
\item $F^j_{k,c} \in \mathbb{R}^d$: $j$-th feature vector from class $c$ assigned to client $k$.%, considered as a random vector. 
\item $\overline{F}_{k,c}  \in \mathbb{R}^d$: sample mean of the feature vectors for class $c$ on client $k$, treated as a random vector. A specific realization of this random vector is denoted by $\hat{\mu}_{k,c}$.
\item $\Var[\overline{F}_{k,c}] = \Cov[\overline{F}_{k,c}, \overline{F}_{k,c}]$ represents the covariance matrix of the random vector $\overline{F}_{k,c}$.
\end{itemize}

%%%%%%%%%%%%%%%%%%%%%%%%%%%%%%%%%%%%%%%%%%%%%%%%%%%%%%%%%%%%%%%%%
 \section{Expectation and Variance of the Sample Mean}
 \label{app-proof-1}
 Let $\{F^j_{k,c}\}_{j=1}^{n_{k,c}}$ be a random sample from a multivariate population with mean $\mu_c$ and covariance $\Sigma_c$, where $F^j_{k,c}$ is the $j\text{-}th$ feature vector of class $c$ assigned to the client $k$ and $n_{k,c}$ is the number of elements of class $c$ in the client $k$. Assuming that the per-class features  $F^j_{k,c}$ in each client are iid in the initialization, then the sample mean of the features for class $c$ 
\begin{equation}
    \overline{F}_{k, c} = \frac{1}{n_{k,c}}\sum_{j=1}^{n_{k,c}} F_{k,c}^j,
\end{equation}
is distributed with mean and covariance given by:
\begin{equation}
\label{eq:app-known-res}
   \E[\overline{F}_{k,c}] = \mu_c \qquad \Var[\overline{F}_{k,c}]=\frac{\Sigma_c}{n_{k,c}}
\end{equation}
 
\begin{proof}
To prove this, we fix the class  $c$ and omit the dependencies on $c$ for simplicity. Thus, we write $n_{k,c}=n_{k}$, $F^j_{k,c} = F^j_k$, $\overline{F}_{k,c}=\overline{F}_k$,  and $\mu_c=\mu$, $\Sigma_c=\Sigma$. 

Since $\{F^j_{k}\}_{j=1}^{n_{k}}$ is a random sample from a multivariate distribution with mean $\mu$ and covariance $\Sigma$, and the per-class features $F_{k}^j$ in each client are i.i.d at initialization, it follows that:
\begin{equation}
        \E[F^j_{k}] = \mu  \qquad
        \Var[F^j_{k}] = \Sigma, \quad \forall{j}
        \label{property-iid}
\end{equation} 
By computing the expectation of  $\overline{F}_{k}$ and using the linearity of expectation, we obtain: 
\begin{equation*}
\E[\overline{F}_{k}]=\E[\frac{1}{n_k} \sum_{j=1}^{n_k} F_{k}^j] = \frac{1}{n_k}\E[F_{k}^1] + \ldots  + \frac{1}{n_k}\E[F_{k}^{n_k}]  = \frac{1}{n_k} (n_k \mu)= \mu,
\end{equation*}
where in the last equality we used \cref{property-iid}. Thus the expectation of the sample mean is $\mu$, which completes the first part of the proof. 
 
Next, we show that the variance of the sample mean is $\frac{\Sigma}{n_k}$. By computing the variance of $\overline{F}_{k}$ and using the fact that the variance scales by the square of the constant, we obtain:
\begin{equation*}
\Var[\overline{F}_{k}] = \Var[\frac{1}{n_k} \sum_{j=1}^{n_k} F_{k}^j] = \frac{1}{n_k^2}\left(\Var[F_{k}^1]+ \ldots +\Var[F_{k}^{n_k}]\right) + \frac{1}{n_k^2}\sum_{i=1}^{n_k}\sum_{\substack{j=1 \\ j \neq i}}^{n_k} \Cov[ F_{k}^i,F_{k}^j].  
\end{equation*}
By the independence assumption of  $\{F^j_{k}\}_{j=1}^{n_{k}}$, the cross terms $\Cov[ F_{k}^i,F_{k}^j]=0$ for $i\neq j$. Applying~\cref{property-iid}, we have:
\begin{equation*}
\Var[\overline{F}_{k}] = \frac{1}{n_k^2}\left(\Var[F_{k}^1]+ \ldots +\Var[F_{k}^{n_k}]\right) = \frac{1}{n_k^2}\left(n_k \Sigma\right) = \frac{\Sigma}{n_k}  
\end{equation*} 
\end{proof}

%%%%%%%%%%%%%%%%%%%%%%%%%%%%%%%%%%%%%%%%%%%%%%%%%%%%%%%%%%%%%%%%
\section{Proof of Proposition 1}
\setcounter{prop}{0}
\label{app-proof2}

\begin{prop}
\label{prop-2}
Let $K$ be the number of clients, each with $n_{k,c}$ features, and let $C$ be the total number of classes. Let $\hat{\mu}_c = \frac{1}{N_c}\sum_{j=1}^{N_c} F^j$ be the unbiased estimator of the population mean $\mu_c$ and $N_c=\sum_{k=1}^K n_{k,c}$ be the total number of features for a single class. Assuming the features for class $c$ are iid across clients at initialization, the estimator
\begin{equation}
\hat{\Sigma}_c = \frac{1}{K-1} \sum_{k=1}^{K} n_{k,c} (\overline{F}_{k,c} - \hat{\mu}_c)(\overline{F}_{k,c} - \hat{\mu}_c)^\top
\label{eq:app-cov-estimate}
\end{equation}
is an unbiased estimator of the population covariance $\Sigma_c$, for all $c \in 1, \ldots, C$.
\end{prop}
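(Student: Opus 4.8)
\textit{Proof proposal.} The plan is to reduce the statement to the classical unbiasedness argument for a (frequency-weighted) sample covariance, via a parallel-axis decomposition around the true population mean $\mu_c$. First I would fix the class $c$ and, following the convention of \cref{app-proof-1}, suppress the subscript $c$, writing $n_k$, $\overline{F}_k$, $\mu$, $\Sigma$, $N=\sum_k n_k$, and $\hat{\mu}$. The key preliminary observation is that the pooled global mean is exactly the frequency-weighted average of the client means, $\hat{\mu} = \frac{1}{N}\sum_{k=1}^K n_k\,\overline{F}_k$, which follows immediately from $\sum_{j=1}^{N} F^j = \sum_{k=1}^K \sum_{j=1}^{n_k} F^j_k = \sum_{k=1}^K n_k \overline{F}_k$. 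This identity is what makes the cross terms cancel below.

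Next I would recenter each deviation as $\overline{F}_k - \hat{\mu} = (\overline{F}_k - \mu) - (\hat{\mu} - \mu)$, substitute into the sum defining $(K-1)\hat{\Sigma}_c$, and expand the outer product into four blocks. Each of the two cross terms carries the factor $\sum_{k} n_k(\overline{F}_k - \mu)$, which by the weighted-mean identity equals $N(\hat{\mu}-\mu)$; combined with the pure $(\hat{\mu}-\mu)(\hat{\mu}-\mu)^\top$ block they collapse, leaving the deterministic identity
\[
\sum_{k=1}^K n_k (\overline{F}_k - \hat{\mu})(\overline{F}_k - \hat{\mu})^\top
= \sum_{k=1}^K n_k (\overline{F}_k - \mu)(\overline{F}_k - \mu)^\top - N (\hat{\mu}-\mu)(\hat{\mu}-\mu)^\top .
\]
I would then take expectations block by block, invoking \cref{eq:known-res} (established in \cref{app-proof-1}): since $\E[\overline{F}_k]=\mu$, the first expectation is $\sum_k n_k \Var[\overline{F}_k] = \sum_k n_k (\Sigma/n_k) = K\Sigma$; and since $\E[\hat{\mu}]=\mu$ with $\Var[\hat{\mu}]=\Sigma/N$, the second is $N(\Sigma/N)=\Sigma$. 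Subtracting gives $(K-1)\Sigma$, so dividing by $K-1$ yields $\E[\hat{\Sigma}_c]=\Sigma_c$.

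I do not expect a serious obstacle, since once the parallel-axis identity is in hand every expectation is supplied directly by \cref{eq:known-res}. The only steps requiring care are the algebraic cancellation of the two cross terms (which hinges on the weighted-mean identity for $\hat{\mu}$, not on any probabilistic cancellation) and the bookkeeping of the weights $n_k$ in $\sum_k n_k \Var[\overline{F}_k]$, which is precisely where the factors of $n_k$ inside $\hat{\Sigma}_c$ are needed so that each client contributes exactly $\Sigma$. Notably, the iid assumption enters only through the covariance formulas $\Var[\overline{F}_k]=\Sigma/n_k$ and $\Var[\hat{\mu}]=\Sigma/N$ of \cref{eq:known-res}; the cross terms vanish deterministically, so no cross-client independence is invoked beyond what is already baked into those two variance expressions.
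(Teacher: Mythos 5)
Your proof is correct and takes essentially the same route as the paper's proof in Appendix C: both rest on the weighted-mean identity $\sum_{k} n_k \overline{F}_k = N\hat{\mu}$ and the moment formulas $\E[\overline{F}_k]=\mu$, $\Var[\overline{F}_k]=\Sigma/n_k$, $\E[\hat{\mu}]=\mu$, $\Var[\hat{\mu}]=\Sigma/N$ established in Appendix B. The only difference is organizational rather than substantive: you first establish the deterministic parallel-axis identity around $\mu$, so the cross terms cancel algebraically before any expectation is taken and the $\mu\mu^\top$ bookkeeping disappears, whereas the paper takes expectations first, expanding the uncentered second moments $\E[\overline{F}_k\overline{F}_k^\top]=\Sigma/n_k+\mu\mu^\top$ and $\E[\hat{\mu}\hat{\mu}^\top]=\Sigma/N+\mu\mu^\top$ and cancelling the $\mu\mu^\top$ terms at the end.
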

\begin{proof}
To prove this, we fix the class  $c$ and omit the dependencies on $c$ for clarity. So we write $n_{k,c}=n_{k}$, $\overline{F}_{k,c}=\overline{F}_k$,  $N_c=N$, $\hat{\mu}_c=\hat{\mu}$, $\hat{\Sigma}_c=\hat{\Sigma}$, $\mu_c=\mu$, and $\Sigma_c=\Sigma$. By the definition of an unbiased estimator, we need to show that:
\begin{equation*}
    \E[\hat{\Sigma}] = \E\left[\frac{1}{K-1} \sum_{k=1}^K n_k (\overline{F}_k - \hat{\mu})(\overline{F}_k - \hat{\mu})^\top\right] = \Sigma.
\end{equation*}
By the linearity of the expectation, the definition of sample mean $\overline{F}_k=\frac{1}{n_k}\sum_{j=1}^{n_k} F_k^j$, and the definition of global class mean $\hat{\mu}=\frac{1}{N}\sum_{k=1}^K\sum_{j=1}^{n_k} F^j_k$ , we have:
\begin{align}
    \E[\hat{\Sigma}]&= \frac{1}{K-1} \left( \sum_{k=1}^Kn_k \E[\overline{F}_k \overline{F}_k^\top]- \sum_{k=1}^K n_k \E[\overline{F}_k \hat{\mu}^\top] -\sum_{k=1}^K n_k \E[\hat{\mu} \overline{F}_k^\top] + \sum_{k=1}^K n_k \E[\hat{\mu} \hat{\mu}^\top]\right) \nonumber \\ &
    =\frac{1}{K-1} \left( \sum_{k=1}^Kn_k \E[\overline{F}_k \overline{F}_k^\top]- 2  \E[ ( \sum_{k=1}^K \sum_{j=1}^{n_k} F_k^j ) \hat{\mu}^\top]
    + \sum_{k=1}^K n_k \E[\hat{\mu} \hat{\mu}^\top]
    \right) \nonumber\\& 
     =\frac{1}{K-1} \left( \sum_{k=1}^Kn_k \E[\overline{F}_k \overline{F}_k^\top]- 2N \E[ \hat{\mu} \hat{\mu}^\top]
    + \sum_{k=1}^K n_k \E[\hat{\mu} \hat{\mu}^\top]
    \right). \label{middle-step}
\end{align}

By applying the variance definition, along with the expectation and variance of the sample mean (see \Cref{eq:app-known-res}), we obtain:  
\begin{equation}
\label{left-term}
    \E[\overline{F}_k \overline{F}_k^\top] = \Var[\overline{F}_k] +  \E[\overline{F}_k]\E[\overline{F}_k]^\top =   \frac{\Sigma}{n_k} + \mu \mu^\top. 
\end{equation}
Now, by considering the right term in~\cref{middle-step}, since $\hat{\mu}$ is an unbiased estimator of the population mean, then $\E[\hat{\mu}]=\mu$. Moreover, since we assume that the features for a single class across clients are i.i.d at initialization, we can use re-use the result in \Cref{eq:app-known-res} by considering the all class features as a random sample of size $N$ from a population with mean $\mu$ and variance $\Sigma$. Consequently, the global sample mean $\hat{\mu}$ is %normally distributed with
has variance $\Var[\hat{\mu}]=\frac{\Sigma}{N}$. Then
\begin{equation}
\label{right-term}
 \E[\hat{\mu} \hat{\mu}^\top]=   \Var[\hat{\mu}] +  \E[\hat{\mu}]\E[\hat{\mu}]^\top =   \frac{\Sigma}{N} + \mu \mu^\top.
\end{equation}
By using~\cref{left-term} and~\cref{right-term} in~\cref{middle-step}, and recalling that $N=\sum_{k=1}^K n_k$, we obtain: 
\begin{align*}
\E[\hat{\Sigma}] &= \frac{1}{K-1} \left(
\sum_{k=1}^K n_k(\frac{\Sigma}{n_k} + \mu \mu^\top)  - 2N (\frac{\Sigma}{N}+ \mu \mu^\top)  + \sum_{k=1}^K n_k (\frac{\Sigma}{N}+ \mu \mu^\top) \right)
\\
& = \frac{1}{K-1} (K \Sigma + \mu \mu^\top N - 2 \Sigma - 2N\mu\mu^\top + (\frac{\Sigma}{N} + \mu \mu^\top) N)
=\frac{1}{K-1} (K-1) \Sigma = \Sigma.
\end{align*}
\end{proof}

%%%%%%%%%%%%%%%%%%%%%%%%%%%%%%%%%%%%%%%%%%%%%%%%%%%%%%%%%%%%%%%%%
\section{Proof of Proposition 2}
\label{app-proof3}

\begin{prop}
\label{prop-3}
Let $F \in \mathbb{R}^{d \times N}$ be a feature matrix with empirical global mean $\hat{\mu}_g \in \mathbb{R}^{d}$, and $Y \in \mathbb{R}^{N \times C}$ be a label matrix. The optimal ridge regression solution $W^* = (G + \lambda I_d)^{-1} B$, where $B \in \mathbb{R}^{d \times C}$ and $G \in \mathbb{R}^{d \times d}$ can be written in terms of class means and covariances as follows:
\begin{equation}
  B = \left[ \hat{\mu}_c N_c \right]_{c=1}^{C},
      \label{eq:app-sol-with-B}
\end{equation}
\begin{equation}
     G = \sum_{c=1}^{C} (N_c-1) \hat{S}_c + \sum_{c=1}^C N_c (\hat{\mu}_c  - \hat{\mu}_g) (\hat{\mu}_c  - \hat{\mu}_g)^\top + N \hat{\mu}_g \hat{\mu}_g^\top
     \label{eq:app-sol-with-cov}
\end{equation}
where the first two terms $\sum_{c=1}^{C} (N_c-1) \hat{S}_c$\; and \;$\sum_{c=1}^C N_c (\hat{\mu}_c  - \hat{\mu}_g) (\hat{\mu}_c  - \hat{\mu}_g)^\top$ represents the within-class and between class scatter respectively, while $\hat{\mu}_c$, $\hat{S}_c$ and $N_c$, denote the empirical mean, covariance and sample size for class $c$, respectively.
\end{prop}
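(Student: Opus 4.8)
The plan is to work directly from the definitions $B = FY$ and $G = FF^\top$, expanding both in terms of per-class quantities. The essential structural fact is that $Y$ is a one-hot label matrix, so its $c$-th column $y_c \in \mathbb{R}^N$ is the indicator vector of class $c$; grouping sums over samples by their class label is therefore the organizing idea throughout.

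First I would establish the formula for $B$. The $c$-th column of $B = FY$ is $Fy_c = \sum_{j \in c} F^j$, where the sum runs over the indices of samples belonging to class $c$. Since $\hat{\mu}_c = \frac{1}{N_c}\sum_{j \in c} F^j$ by definition, this column equals $N_c \hat{\mu}_c$, which gives $B = [\hat{\mu}_c N_c]_{c=1}^{C}$ immediately. This part is a one-line consequence of the one-hot structure.

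Next, for $G$, I would write $G = FF^\top = \sum_{j=1}^N F^j (F^j)^\top$ and group the sum by class to get $G = \sum_{c=1}^C \sum_{j \in c} F^j (F^j)^\top$. The key step is to relate the per-class uncentered second moment $\sum_{j \in c} F^j (F^j)^\top$ to the sample covariance $\hat{S}_c$. Expanding $(N_c-1)\hat{S}_c = \sum_{j \in c} (F^j - \hat{\mu}_c)(F^j - \hat{\mu}_c)^\top$ and using $\sum_{j \in c} F^j = N_c \hat{\mu}_c$ to cancel the cross terms yields $\sum_{j \in c} F^j (F^j)^\top = (N_c-1)\hat{S}_c + N_c \hat{\mu}_c \hat{\mu}_c^\top$. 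Summing over $c$ gives $G = \sum_c (N_c-1)\hat{S}_c + \sum_c N_c \hat{\mu}_c \hat{\mu}_c^\top$.

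The final step, and the only one requiring care, is to re-express the raw term $\sum_c N_c \hat{\mu}_c \hat{\mu}_c^\top$ as the stated between-class scatter plus the global term $N \hat{\mu}_g \hat{\mu}_g^\top$. I would expand $\sum_c N_c (\hat{\mu}_c - \hat{\mu}_g)(\hat{\mu}_c - \hat{\mu}_g)^\top$, collect the cross terms, and substitute the two identities $\sum_c N_c \hat{\mu}_c = N \hat{\mu}_g$ (which follows from $\hat{\mu}_g = \frac{1}{N}\sum_{j} F^j = \frac{1}{N}\sum_c N_c \hat{\mu}_c$) and $\sum_c N_c = N$; the three resulting $\hat{\mu}_g \hat{\mu}_g^\top$ contributions collapse into one, leaving $\sum_c N_c (\hat{\mu}_c - \hat{\mu}_g)(\hat{\mu}_c - \hat{\mu}_g)^\top = \sum_c N_c \hat{\mu}_c \hat{\mu}_c^\top - N \hat{\mu}_g \hat{\mu}_g^\top$. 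Substituting this back into the expression for $G$ completes the decomposition. There is no genuine analytic obstacle; the proof is entirely algebraic bookkeeping, and the single place to stay careful is tracking the centering identity $\sum_c N_c \hat{\mu}_c = N \hat{\mu}_g$ so that the between-class scatter and the outer global-mean term separate cleanly.
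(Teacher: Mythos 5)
Your proof is correct, and it is organized differently from the paper's. The paper never touches the per-class raw second moments: it first relates $G=FF^\top$ to the \emph{global} sample covariance by expanding $\hat{S} = \frac{1}{N-1}(F-\hat{\mu}_g\mathbf{1}^\top)(F-\hat{\mu}_g\mathbf{1}^\top)^\top$ in matrix form, solving to get $G = (N-1)\hat{S} + N\hat{\mu}_g\hat{\mu}_g^\top$, and then cites without proof the well-known decomposition of the total scatter $(N-1)\hat{S}$ into within-class plus between-class terms. You instead group $G=\sum_j F^j(F^j)^\top$ by class, derive the per-class identity $\sum_{j\in c}F^j(F^j)^\top = (N_c-1)\hat{S}_c + N_c\hat{\mu}_c\hat{\mu}_c^\top$, and then prove the remaining identity $\sum_c N_c(\hat{\mu}_c-\hat{\mu}_g)(\hat{\mu}_c-\hat{\mu}_g)^\top = \sum_c N_c\hat{\mu}_c\hat{\mu}_c^\top - N\hat{\mu}_g\hat{\mu}_g^\top$ yourself via $\sum_c N_c\hat{\mu}_c = N\hat{\mu}_g$. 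The net effect is that your argument is fully self-contained -- it supplies a proof of exactly the ``well known result'' the paper invokes as a black box -- at the cost of slightly more bookkeeping, while the paper's route through the global covariance is more compact and makes the role of $\hat{S}$ explicit. Both are pure algebra with no analytic content, and your handling of the centering identity, which is indeed the only place an error could creep in, is correct.
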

\begin{proof}
The first part, regarding \cref{eq:app-sol-with-B}, follows directly. From the ridge regression solution, $B=FY$, which is obtained by summing the features for each class and arranging them into the columns of a matrix. This results in the product of class means and samples per class.

Now, for computing the matrix $G$, we proceed with the definition of the global sample covariance:
\begin{equation*}
\hat{S} = \frac{1}{N-1}(F - \overline{F}) (F - \overline{F})^\top  
= \frac{1}{N-1} \left( F F^\top - F \overline{F}^\top - \overline{F} F^\top + \overline{F}\,\overline{F}^\top \right),
    \label{eq:glob-cov}
\end{equation*}
where $\overline{F} = \left( \frac{1}{N} \sum_{j=1}^{N} F^j \right)\mathbf{1}^\top = \hat{\mu}_{g}\mathbf{1}^\top \in \mathbb{R}^{d \times N}$ is the matrix obtained by replicating the global mean $N$ times in each column and $\mathbf{1} \in \mathbb{R}^{N \times 1}$ is a column vector of ones. Recalling that $G = FF^\top$, we have:  
\begin{equation*}
    \hat{S} = \frac{1}{N-1}(G - F \mathbf{1} \hat{\mu}_g^\top -  \hat{\mu}_g \mathbf{1}^\top F^\top +  \hat{\mu}_g  \mathbf{1}^\top \mathbf{1} \hat{\mu}_g^\top) = \frac{1}{N-1}(G - 2  F \mathbf{1} \hat{\mu}_g^\top +  N \hat{\mu}_g  \hat{\mu}_g^\top)
\end{equation*}
since $F \mathbf{1} \hat{\mu}_g^{\top}=  \hat{\mu}_g \mathbf{1}^T F^{\top}$ and $\mathbf{1}^T \mathbf{1}=N$.

Now, since $F\mathbf{1} = \sum_{j=1}^{N} F^j$, we can obtain the matrix $G$ as: 
\begin{equation}
    G = (N-1)\hat{S} +  2  \left(\sum_{j=1}^{N} F^j\right)\hat{\mu}_g^\top - N \hat{\mu}_g \hat{\mu}_g^\top = (N-1)\hat{S} + 2N \hat{\mu}_g \hat{\mu}_g^\top - N \hat{\mu}_g \hat{\mu}_g^\top  = (N-1) \hat{S} + N \hat{\mu}_g \hat{\mu}_g^\top
    \label{last-G}
\end{equation}
It is a well known result that the global covariance can be expressed as:
\begin{equation*}
\hat{S}  = \frac{1}{N-1} \left(\sum_{c=1}^{C} (N_c-1) \hat{\Sigma}_c + \sum_{c=1}^C N_c (\hat{\mu}_c  - \hat{\mu}_g) (\hat{\mu}_c  - \hat{\mu}_g)^T\right),
       \label{global-cov}
\end{equation*}
Replacing the global covariance $\hat{S}$ in~\cref{last-G}, we obtain the final expression for $G$ as:
\begin{equation*}
     G = \sum_{c=1}^{C} (N_c-1) \hat{S}_c + \sum_{c=1}^C N_c (\hat{\mu}_c  - \hat{\mu}_g) (\hat{\mu}_c  - \hat{\mu}_g)^\top + N \hat{\mu}_g \hat{\mu}_g^\top
\end{equation*}
\end{proof}

%%%%%%%%%%%%%%%%%%%%%%%%%%%%%%%%%%%%%%%%%%%%%%%%%%%%%%%%%%%%%%%%%

\section{Sampling Multiple Class Means per Client} 
\label{app-sampling}
From a theoretical perspective, although the estimator $\hat{\Sigma}_c$ is unbiased in~\cref{eq:cov-estimate}, this only ensures that its expected value equals the true covariance $\Sigma_c$ -- not that any individual estimate is accurate. Its variance still depends on the number of independent means $K$ sampled -- that is the number of clients in the federation $K$ which contains class $c$. 

Therefore, increasing $K$ leads to a tighter concentration of $\hat{\Sigma}_c$ around its expectation, reducing the overall mean square error (MSE) -- the sum of variance and squared bias (which remains zero since the estimator is unbiased) -- between the estimator and the true covariance matrix. Moreover, where the number of clients is small relative to the feature dimension $d$, the estimate may be ill-conditioned. The shrinkage term $\gamma I_d$ in~\cref{eq:final-cov-estimate} improves numerical stability in these settings, trading a small amount of bias for a significant reduction in variance -- even when only a few clients are available. 

\minisection{Sampling strategy.} We propose using a simple multiple mean sampling strategy following sampling without replacement. We consider the number of means $\mathcal{M}$ to sample as a fixed number which is a hyperparameter. For each class, we take disjoint random sets from $n_{k,c}$ samples in a client and compute the mean for these subsets. We take disjoint sets to avoid computing similar sample means. The only condition we enforce is that clients use atleast 2 samples to compute a mean. If a client does not have atleast $2\mathcal{M}$ samples, we send less than $\mathcal{M}$ sample means. For instance, if a client has 3 samples, we compute and share a single mean. 

We observe that the proposed sampling approach improves performance. However, more sophisticated sampling approaches could be employed if the user is interested in improving performance in FL settings with very few clients. One approach could be sampling means based on the number of samples $n_{k,c}$ instead of using a fixed number of means to sample from every client. If a client has more samples, it could share more sample means and this would thus share more means overall and improve the covariance estimates. Future work could optimize this multiple mean sampling approach to better suit fewer client FL settings.

%%%%%%%%%%%%%%%%%%%%%%%%%%%%%%%%

% \vspace{-5pt}
\section{On Excluding Between-class Scatter}
\label{btw-class}

Intuitively, to represent the feature distribution of each class we do not really need to consider the relationships between different classes which represent the distribution of the overall dataset since our goal is to estimate the class-specific classifier weights using these covariances. Based on this intuition, we propose to remove the between-class scatter from~\cref{eq:sol-with-cov} and initialize the classifier weights using only the respective within-class scatter matrices.

We also analyze in~\cref{g_btw} using the centralized setting how the different scatter matrices affect overfitting of the model. 
We observe that, while both methods achieve similarly high training accuracies, Ridge Regression consistently underperforms on the test set. This suggests that incorporating $G_{\text{btw}}$ introduces an overfitting effect, as the classifier learns directions that do not transfer well to unseen samples.

\begin{table}[h]
\centering
\caption{Comparison of classifiers across datasets with and without $G_{\text{btw}}$ in the centralized setting}.
\vspace{5pt}
\resizebox{0.65\textwidth}{!}{
\begin{tabular}{c|c|c|c|c|c}
\hline
Classifier & $G_{\text{with}}$ & $G_{\text{btw}}$ & Dataset & Train Acc & Test Acc \\
\hline
Ridge Regression & \Checkmark & \Checkmark & CIFAR-100 & 60.0 & 57.1 \\
Ours             & \Checkmark & \XSolidBrush & CIFAR-100 & 60.4 & \textbf{57.3} \\
\hline
Ridge Regression & \Checkmark & \Checkmark & Imagenet-R & 52.8 & 37.6 \\
Ours             & \Checkmark & \XSolidBrush & Imagenet-R & 53.4 & \textbf{38.6} \\
\hline
Ridge Regression & \Checkmark & \Checkmark & CUB200 & 92.0 & 50.4 \\
Ours             & \Checkmark & \XSolidBrush & CUB200 & 91.3 & \textbf{53.7} \\
\hline
Ridge Regression & \Checkmark & \Checkmark & Cars & 85.9 & 41.4 \\
Ours             & \Checkmark & \XSolidBrush & Cars & 86.2 & \textbf{44.8} \\

\hline
\end{tabular}
}
\label{g_btw}
\end{table}

Finally, we also empirically analyze in~\cref{fl_g_btw} the impact of removing the between-class covariances in a Federated scenario using SqueezeNet. Here in the FL setup we again clearly see the negative effect of incorporating between-class scatter statistics.

\begin{table}[h]
\centering
\vspace{-5pt}
\caption{Performance comparison of different FL setups using $G_{\text{btw}}$ and $G_{\text{with}}$ across datasets.}
\vspace{5pt}
\begin{tabular}{c|c|c|c|c}
\hline
FL Setup & CIFAR-100 & ImageNet-R & CUB200 & CARS \\
\hline
Using $G_{\text{btw}}$ & 52.8 & 34.8 & 49.7 & 33.7 \\
Using $G_{\text{btw}}$ + $G_{\text{with}}$ & 56.3 & 36.8 & 51.6 & 42.4 \\
Using $G_{\text{with}}$ (Ours) & 56.3 & 37.2 & 53.5 & 44.6 \\
\hline
\end{tabular}
\vspace{-5pt}
\label{fl_g_btw}
\end{table}

%%%%%%%%%%%%%%%%%%%%%%%%%%%%%%%%%%%%%%%

\section{Bias of the Estimator with non-iid Client Features}
\label{app-estimatorbias}

In~\cref{app-proof2} we showed that, under the assumption that the per-class features  are iid across clients, the proposed estimator is an \textit{unbiased estimator}. In this section, we theoretically quantify the bias when the i.i.d assumption is violated.

Under the i.i.d. assumption, the single class features assigned to clients can be treated as random samples from the \textit{same} population distribution with mean $\mu_c$ and covariance $\Sigma_c$. For simplicity, focusing on a single class and dropping the class subscript $c$, the population distribution has mean $\mu$ and covariance $\Sigma$. As a result, recalling~\cref{left-term}, we can write:
\begin{equation*}
    \E[\overline{F}_k \overline{F}_k^\top] = \Var[\overline{F}_k] +  \E[\overline{F}_k]\E[\overline{F}_k]^\top =   \frac{\Sigma}{n_k} + \mu \mu^\top,
\end{equation*}
where $n_k$ is the number of samples assigned to client $k$, and $\overline{F}_k$ is the sample mean for client $k$

Now, if the \textit{i.i.d assumption is violated} the local features assigned to each client can be viewed as random samples drawn from different client population distributions, each characterized by a mean $\mu_k$ and covariance $\Sigma_k$, with $\mu_i \neq \mu_j$ and $\Sigma_i \neq \Sigma_j$ for $i \neq j$, and $i,j=1,\ldots, K$. In this case:  
\begin{equation}
    \E[\overline{F}_k \overline{F}_k^\top] = \Var[\overline{F}_k] +  \E[\overline{F}_k]\E[\overline{F}_k]^\top =   \frac{\Sigma_k}{n_k} + \mu_k \mu_k^\top.
    \label{not-iid-condition}
\end{equation}
To compute the expectation of the estimator $\E[\hat{\Sigma}]$, we follow the same procedure used to prove proposition in~\cref{app-proof2} up to~\cref{middle-step}:
\begin{equation}
      \E[\hat{\Sigma}] = \frac{1}{K-1} \left( \sum_{k=1}^Kn_k \E[\overline{F}_k \overline{F}_k^\top]- 2N \E[ \hat{\mu} \hat{\mu}^\top]
    + \sum_{k=1}^K n_k \E[\hat{\mu} \hat{\mu}^\top]
    \right).
    \label{expectation}
\end{equation}
Assuming the global feature dataset, regardless of client assignment, is a random sample from the population with mean $\mu$ and covariance $\Sigma$, we can write: 
\begin{equation}
 \E[\hat{\mu} \hat{\mu}^\top]=   \Var[\hat{\mu}] +  \E[\hat{\mu}]\E[\hat{\mu}]^\top =   \frac{\Sigma}{N} + \mu \mu^\top.
 \label{global-condition}
\end{equation}
Substituting~\cref{global-condition} and~\cref{not-iid-condition} into~\cref{expectation}, and recalling that $N=\sum_{k=1}^K n_k$, we obtain:
\begin{align*}
\E[\hat{\Sigma}] &= \frac{1}{K-1} \left(
\sum_{k=1}^K n_k(\frac{\Sigma_k}{n_k} + \mu_k \mu_k^\top)  - 2N (\frac{\Sigma}{N}+ \mu \mu^\top)  + \sum_{k=1}^K n_k (\frac{\Sigma}{N}+ \mu \mu^\top) \right)
\\
&=\frac{1}{K-1} \left(\sum_{k=1}^K n_k(\frac{\Sigma_k}{n_k} + \mu_k \mu_k^\top) - \Sigma - N \mu \mu^\top\right) \\
&= \frac{1}{K-1} \sum_{k=1}^K (\Sigma_k - \frac{\Sigma}{K}) + \frac{1}{K-1}\left(\sum_{k=1}^K n_k \mu_k \mu_k^\top - \sum_{k=1}^K n_k \mu \mu^\top \right)  \\
&= \frac{1}{K-1} \sum_{k=1}^K (\Sigma_k - \frac{\Sigma}{K}) + \frac{1}{K-1} \sum_{k=1}^K n_k (\mu_k \mu_k^\top -  \mu \mu^\top)  \\
&=\frac{1}{K-1} \sum_{k=1}^K (\Sigma_k - \frac{\Sigma}{K}) + \frac{1}{K-1}\sum_{k=1}^K n_k (\mu_k -\mu )(\mu_k -\mu)^\top ,
\end{align*}
where in the last step we used that $\sum_{k=1}^{K} n_k \mu_k = N\mu$. 

The bias of the estimator is thus given by: 
\begin{equation}
    \text{Bias}(\hat{\Sigma}) = \E[\hat{\Sigma}] -  \Sigma = \frac{1}{K-1} \sum_{k=1}^K (\Sigma_k - \Sigma) + \frac{1}{K-1}\left(\sum_{k=1}^K n_k (\mu_k -\mu )(\mu_k -\mu)^\top \right).
\end{equation}
 
Note that if each client population covariance $\Sigma_k$ is equal to the global population covariance $\Sigma$, and the mean of each client $\mu_k$ is equal to the population mean, then the bias is zero (i.e., the estimator is  unbiased). However, the bias formula reveals that when the distribution of a class within a client differs from the global distribution of the same class, our estimator introduces a systematic bias. This situation can arise in the \textit{feature-shift} setting, in which each client is characterized by a different domain. We next evaluate FedCOF under the feature-shift setting to quantify how this bias affects performance in this specific scenario.

% \vspace{-5pt}
\subsection{Experiments on Feature Shift Settings.}

\begin{wraptable}{r}{7cm}
\vspace{-15pt}
\begin{center}
\caption{Comparison of different training-free methods using MobileNetV2 on the feature shift setting on DomainNet. We show the total communication cost (in MB) from all clients to server.}
% \vspace{5pt}
\resizebox{0.45\textwidth}{!}{
\begin{tabular}{c|cc}
\hline
Method & Acc ($\uparrow$) & Comm. ($\downarrow$)  \\
           % \midrule
\hline

\rule{0pt}{2ex} 
FedNCM  & 65.8 & 0.3 \\
Fed3R & 81.9 & 39.6 \\
FedCOF & 74.1 & 0.3 \\ 
\hline
\rule{0pt}{2ex} 
FedCOF (2 class means per client) & 76.5 & 0.6 \\
FedCOF (10 class means per client) & 78.8 & 3.1 \\

\hline
\end{tabular}
}

\vspace{-10pt}
\label{table_feature_shift}
\end{center}
% \end{table}
\end{wraptable}

Following~\citep{lifedbn}, we perform experiments with MobileNetv2 in a non-iid feature shift setting on the DomainNet~\citep{peng2019moment} dataset. DomainNet contains data from six different domains: Clipart, Infograph, Painting, Quickdraw, Real, and Sketch. We use the top 10 most common classes of  DomainNet for our experiments following the setting proposed by~\citep{lifedbn}. We consider six clients where each client has i.i.d. data from one of the six domains. As a result, different clients have data from different feature distributions. We show in~\cref{table_feature_shift} how training-free methods perform in feature shift settings and the accuracy to communication trade-offs.

Fed3R achieves better overall performance then FedCOF, likely due to its use of exact class covariance, avoiding the bias that FedCOF introduces. However, FedCOF achieves comparable results while significantly reducing communication costs. FedNCM perform worse than FedCOF at the same communication budget. When we increase the number of means sampled from each client, the performance of our approach improves. This is due to the fact that our method suffers with low number of clients (only 6 in this experiments) and sampling multiple means helps. 

While non-iid feature shift settings have been studied in some papers not using pre-trained models, using this setting with pre-trained models works a bit differently. When using a pre-trained model, the generalization capabilities of the pre-trained model can help in moving the distribution of class features across clients towards an iid feature distribution even if the class distribution across clients is non-iid at the image level. We believe that more comprehensive analysis of feature-shift settings when using pre-trained models requires more extensive benchmarks with higher number of clients and could be an interesting direction to explore in future works. 

%%%%%%%%%%%%%%%%%%%%%%%%%%

\section{Communication Costs}
\label{app-comm}

When computing communication costs we consider that the pre-trained models are on the clients similar to~\citep{fani2024accelerating} and do not need to be communicated.
All parameters are considered to be 32-bit floating point numbers (i.e. 4 bytes) in all our analyses and experiments.

For \textit{training-free} methods, each client sends its local statistics to the server only once, as in~\citep{legate2023guiding,fani2024accelerating}. In the multi-round federated learning setting, this can be efficiently implemented by ensuring on the client-side that each client only shares its local statistics during its first participation round. This avoids repeated communication of statistics from clients for all training-free methods.
Following~\citep{fani2024accelerating}, the communication from server to client is not considered since the clients does not need to receive any updates from the server as the client models are not updated in the training-free initialization of the global classifier.
Thus, the communication cost is the same for a single round setting with full client participation and the setting with multiple rounds having partial client participation in each round.

Let $C_k$ denote the number of classes present in client $k$. Due to the non-iid Dirichlet data distribution, $C_k < C$, where $C$ is the total number of classes in the dataset. As a result, the communication cost varies across clients. Defining $M=\sum_{k=1}^K C_k$ as the total number of class means shared from the $K$ clients and assuming that each class mean is a $d$-dimensional feature vector, the communication cost of each evaluated approach is given by:

\begin{itemize}%[leftmargin=*, itemsep=0pt, topsep=0pt, parsep=0pt, partopsep=0pt]
\item FedNCM shares a total of $M$ means from $K$ clients. Cost = $Md \cdot 32.$
 \item Fed3R shares a total of $M$ sum of class features and a total of $K$ matrices of $d^2$ dimensionality from $K$ clients. Cost = $(Md + Kd^2) \cdot 32.$
 \item FedCOF shares a total of $M$ means from $K$ clients. Cost = $Md \cdot 32.$
 \item CCVR and FedCOF-Oracle shares a total of $M$ means and $M$ covariance matrices of $d^2$ dimensionality from $K$ clients. Cost = $M \cdot (d + d^2) \cdot 32.$
 \end{itemize}
 For the non-iid sampling in experiments reported in~\cref{table1}, the values of $M$ for CIFAR-100, ImageNet-R, CUB200, Cars and iNat-120K are 2870, 3470, 2353, 2634 and 54590, respectively.

Instead, for \textit{federated finetuning}, the communication cost is calculated as follows. Let $E$, $C$ and $H=d\cdot C$ denote the sizes of the feature extractor, the total number of classes and the classifier head, respectively. Assuming that $s<K$ is the number of clients participating in each round and $T$ is the total number of communication rounds, the communication cost for federated \textit{full-finetuning} is $2 \cdot (E+d\cdot C) \cdot T \cdot s \cdot 32$ and the cost for federated \textit{linear probing} will be $2 \cdot d\cdot C \cdot T \cdot s \cdot 32$.

For \textit{federated prompt-tuning} methods~\citep{weng2024probabilistic}, considering the classifier weights $H = dC$, $T$ rounds of communication, and $s$ clients per round, the communication costs are:
\begin{itemize}%[leftmargin=*, itemsep=0pt, topsep=0pt, parsep=0pt, partopsep=0pt]
\item FedAvg-PT and FedProx-PT share the classifier weights $H = d\cdot C$ and the fixed size prompt pool $P$ for each client of size $d$ at each round. Cost = $2 \cdot (d\cdot C + P \cdot d) \cdot T \cdot s \cdot 32.$
\item PFPT shares the classifier weights $H = d\cdot C$ and a variable size prompt pool $P_i$ for each client at each round $i$. Considering the sum of all $P_i$ across all $T$ rounds as $\sum_{i=1}^T{P}_i$, the communication cost is $2 \cdot (d\cdot C \cdot T +  d \cdot\sum_{i=1}^{T}{P_i} ) \cdot s \cdot 32.$
\end{itemize}

For the experiments reported in~\cref{pfpt-table}, we follow the same training settings as~\citep{weng2024probabilistic} and train for 120 rounds with $s=10$ clients participating per round. We use a fixed prompt pool size $P=20$ for FedAvg-PT and FedProx-PT and a variable prompt pool size which is optimized over rounds for PFPT. The sums of the variable prompt pool sizes $\sum_{i=1}^T{P}_i$ across all rounds for CIFAR-100, ImageNet-R, CUB200 and Cars are 1777, 5205, 4736 and 4736 respectively. 

Using PFPT~\citep{weng2024probabilistic} on CIFAR-100, the prompt pool size starts from 20 and ends at 10 after 120 rounds leading to improved performance and reduced communication compared to FedAvg-PT and FedProx-PT. This is consistent with the results from~\citep{weng2024probabilistic}. However, we observe in our experiments on out-of-distribution datasets like ImageNet-R and fine-grained datasets like CUB200 and Cars that the prompt pool size increases over rounds leading to increased communication costs with respect to FedAvg-PT and FedProx-PT. We also see in~\cref{pfpt-table} that all existing prompt-tuning methods performs very poorly on fine-grained datasets like CUB200 and Cars.

%%%%%%%%%%%%%%%%%%%%%%%%%%%%%%%%%%

\section{Improving Privacy Preservation in FedCOF}
\label{app-privacy}
In this section we discuss additional privacy techniques to prevent the sharing of client class-wise count statistics and class means.
\subsection{Adding Random Noise to Protect Class-wise Statistics.}

Our method requires transmitting class-wise statistics to compute the unbiased estimator of the population covariance (\cref{eq:app-cov-estimate}) and classifier initialization, similar to other methods in federated learning~\citep{legate2023guiding, luo2021no}. In general, transmitting the class-wise statistics may raise privacy concerns, since each client could potentially expose its class distribution. Inspired by differential privacy~\citep{10.1007/11681878_14}, we propose perturbing the class-wise statistics of each client with different types and intensities of noise, before transmission to the global server. This analysis allows us to evaluate how robust FedCOF is to variations in class-wise statistics and whether noise perturbation mechanisms can effectively hide the true client class statistics. Specifically, we propose perturbing the class-wise statistics as follows:
\begin{equation}
    \widetilde{n}_{k,c} = \max(n_{k,c} + \sigma^{\text{noise}}_{\epsilon}, 0)
\end{equation}
where $\sigma^{\text{noise}}_{\epsilon}$ is noise added to the statistics, and $\epsilon$ is a parameter representing the noise intensity. The $\max$ operator clips the class statistics to zero if the added noise results in negative values, which is expected to happen in federated learning with highly heterogeneous client distributions. When clipping is applied, the client does not send the affected class statistic and class mean, and the server excludes them from the computation of the unbiased estimator.

\begin{figure}
    \includegraphics[width=0.93\linewidth]{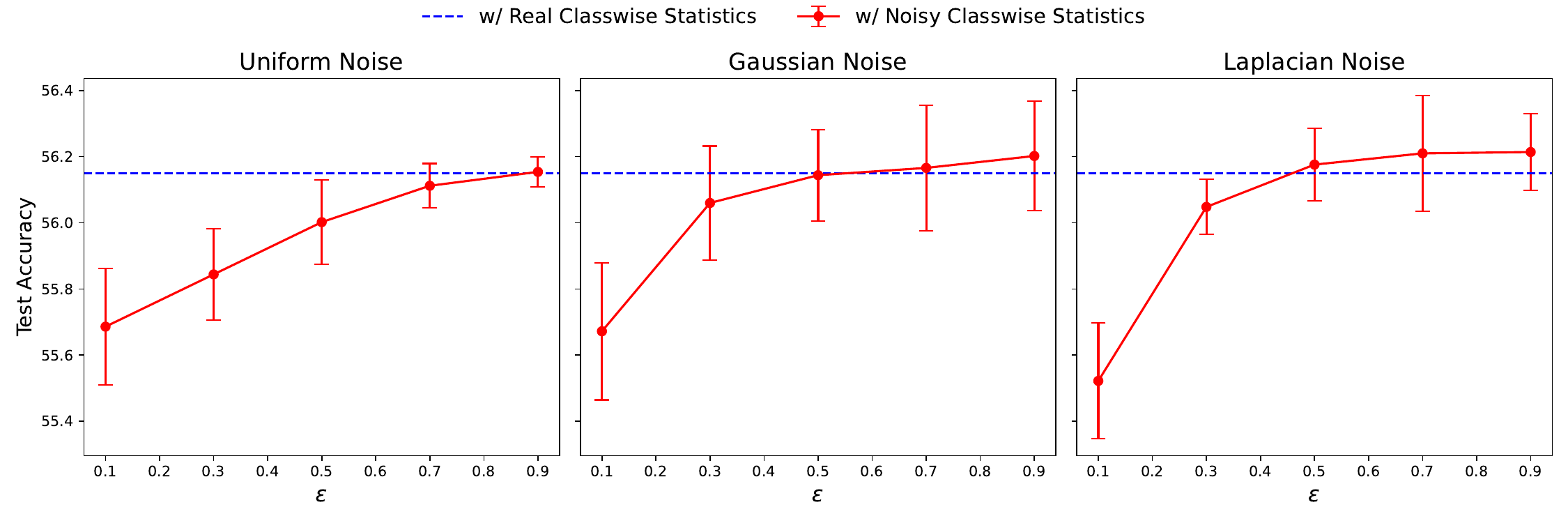}
    \caption{Performance of FedCOF with noisy class statistics on CIFAR-100 using SqueezeNet. The number of clients is fixed at 100 and classes are distributed using a Dirichlet distribution with $\alpha = 0.1$. Results are averaged over five random seeds, each generating different noise in client statistics, and the standard deviation is reported. FedCOF demonstrates robustness to uniform, Gaussian, and Laplace perturbations in class statistics, with performance showing a slight drop as noise, parameterized by $\epsilon$, increases. Lower $\epsilon$ corresponds to higher noise levels in the class statistics.}
    \label{fig:perturbed_accuracy}
\end{figure}

\begin{figure}
    \centering
\includegraphics[width=0.93\linewidth]{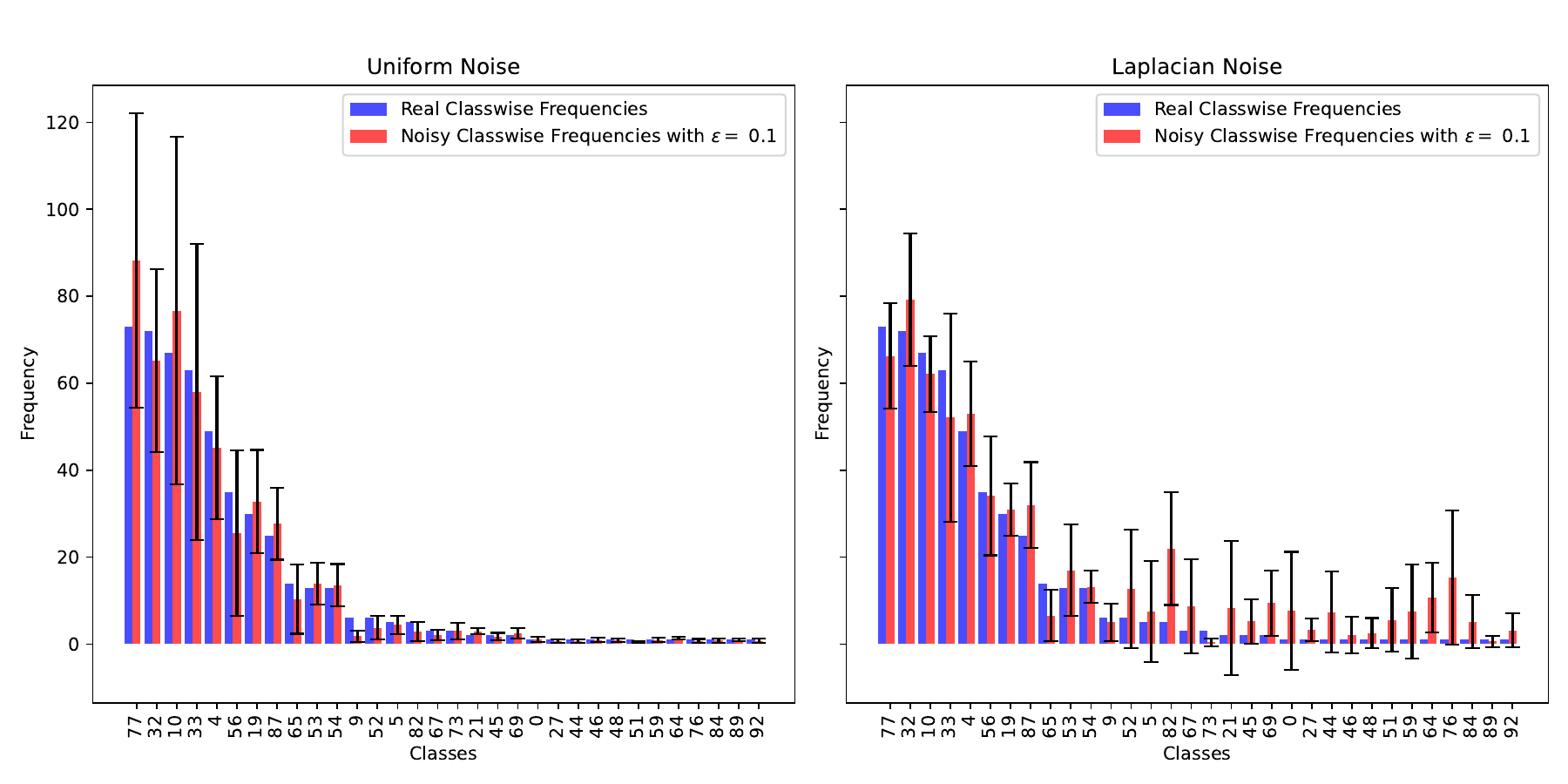}
    \caption{Class frequency distributions for a single client under different noise types: uniform noise (left) and Laplacian noise (right) on CIFAR-100. Both noise types are applied to the real class statistics with the highest noise intensity ($\epsilon = 0.1$). The bar heights represent the average class frequencies, and the error bars indicate the standard deviation across 5 seeds. Real class-wise frequencies and their noisy counterparts are shown for comparison.}
    \label{fig:client-distribution}
\end{figure}

We consider three types of noise: 
\begin{itemize}
    \item \textit{Uniform noise}: $\sigma^{\text{unif}}_\epsilon  \sim \mathcal{U}(-(1-\epsilon) n_{k,c}, +(1-\epsilon)n_{k,c})$, proportional to the real class statistics.
    \item \textit{Gaussian noise}: $\sigma^{\text{gauss}}_\epsilon \sim \mathcal{N}(0, \frac{1}{\epsilon})$, independent of the real class statistics.
    \item \textit{Laplacian noise} $\sigma^{\text{laplace}}_\epsilon \sim \mathcal{L}(0, \frac{1}{\epsilon})$, which is also independent of the real class statistics.
\end{itemize}
Lower $\epsilon$ values correspond to higher levels of noise in the statistics.

In~\cref{fig:perturbed_accuracy}, we show that the performance of FedCOF is robust with respect to the considered noise perturbation, varying the intensity of $\epsilon \in \{0.1, 0.3, 0.5, 0.7, 0.9\}$. These results suggest that a differential privacy mechanism can be implemented to mitigate privacy concerns arising from the exposure of client class-wise frequencies. In~\cref{fig:client-distribution}, we provide a qualitative overview of how the proposed Laplacian and uniform noise perturbation affect class-wise distributions.

\subsection{FedCOF with Secure Aggregation Protocols}
\label{app-secure-agg}

To incorporate secure aggregation protocols with FedCOF, we propose to use pairwise masking between clients following~\citep{bonawitz2016practical}. Each client $i$ generates a random noise vector $z_{i,j}$ for every other client $j$. Then, each of these clients $i$ computes a perturbation vector $p_{i,j} = z_{i,j} - z_{j,i}$. Similarly, $p_{j,i} = z_{j,i} - z_{i,j}$ resulting in $p_{j,i} = -p_{i,j}$. These perturbation vectors are added to the client statistics before sharing them to the server. When all client statistics are added in the server, the noise cancels out across clients and the server can use the aggregate statistics. 

To use such a secure aggregation protocol with FedCOF, the communication cost will increase and become similar to Fed3R. We proceed in two steps:
\begin{enumerate}[leftmargin=*]
\item \textbf{Secure aggregation of class means and counts.} In this phase, each client sends only class means and counts, adding perturbation vectors to the means and perturbation scalars to the counts to ensure privacy. The server then computes the global class means from the aggregated statistics and sends them back to the clients. Specifically each client $k$ sends:
\begin{equation*}
    u_{k,c} = q_k + n_{k,c}, \qquad v_{k,c} = p_k + \mu_{k,c} \times n_{k,c},
\end{equation*}
where $n_{k, c}$ and $\mu_{k, c}$ are the original class counts and means; $p_k = \sum_{i\in K,i \neq k}p_{k,i}$ is the total perturbation vector for client $k$, and $\sum_{k=1}^K p_k = 0$ (similarly $q_k$ is a perturbation scalar such that $\sum_{k=1}^K q_k = 0$).

The server aggregates these quantities, and since the perturbations cancel out, it can compute: 
  \begin{equation*}
        \mu_c = \frac{\sum_{k=1}^K v_{k,c}}{\sum_{k=1}^{K} u_{k,c}}, 
        \qquad
        N_c = \sum_{k=1}^{K} u_{k,c},
    \end{equation*}
and sends $(\mu_c, N_c)$ back to all clients.

\item \textbf{Secure aggregation of class-covariance terms.} Each client computes
\begin{equation*}
  s_{k,c} =(N_c - 1) n_{k,c} (\mu_{k,c} - \mu_c)(\mu_{k, c} - \mu_c)^{\top},  
\end{equation*}
and forms
\begin{equation*}
S_k = \sum_{c=1}^{C} s_{k,c} + M_k,   
\end{equation*}
  where $M_k$ is a random matrix such that $\sum_{k=1}^{K}M_{k}=0$. 

  After receiving all $S_k$, the server sums them,  and since the noise terms cancel out, it obtains: 
\begin{equation*}
    \sum_{k=1}^{K} S_k = \sum_{k=1}^{K} \sum_{c=1}^{C} (N_c - 1) n_{k,c} (\mu_{k,c} - \mu_c)(\mu_{k, c} - \mu_c)^{\top}.
\end{equation*}
The final global matrix is then estimated as:
\begin{align*}
    \hat{G} &= \!\frac{1}{K\!-\!1}\left(\sum_{k=1}^{K} S_k + \sum_{c=1}^CK(N_c - 1)\gamma I_d\right) +N \mu_g \mu_g^T \\&=\! \frac{1}{K\!-\!1}\!\left(\sum_{k=1}^{K}\sum_{c=1}^{C}(N_c - 1)n_{k,c}(\mu_{k,c} \!-\!\mu_c)(\mu_{k,c}\!-\!\mu_c)^\top\!\!\right)\!\!
+ \!\frac{K}{K-1}\sum_{c=1}^{C}(N_c - 1)\gamma I_d
\!+ \!N \mu_g \mu_g^\top \\
&=\!\sum_{c=1}^{C}(N_c - 1)
\left[
\frac{1}{K-1}\sum_{k=1}^{K} n_{k,c}(\mu_{k,c} - \mu_c)(\mu_{k,c} - \mu_c)^\top
+ \gamma I_d
\right]
+ N \mu_g \mu_g^\top.
\end{align*}
  where $N = \sum_{c=1}^C N_c$ and $\mu_g = \frac{\sum_{c=1}^{C} \mu_c}{N}$. This results in the exact same formulation of the matrix $\hat{G}$ as computed in the proposed FedCOF.

\end{enumerate}

Although this increases training-free communication costs which becomes equivalent to Fed3R, FedCOF offers benefits in terms of accuracy and faster convergence which also reduces overall communication costs when fine-tuning after FedCOF initialization.

%%%%%%%%%%%%%%%%%%%%%%%%%%%%%%%%%%%%%%

\section{Dataset and Implementation Details}
\label{app-details}

\minisection{Datasets.}
We use the following five datasets in our paper:
\begin{itemize}
    \item \textbf{CIFAR-100} has 100 classes provided in 50k training and 10k testing images.
    \item \textbf{ImageNet-R (IN-R)} is composed of 30k images covering 200 ImageNet classes. ImageNet-R~\citep{hendrycks2021many} is an out-of-distribution dataset and proposed to evaluate out-of-distribution generalization using ImageNet pre-trained weights. It contains data with multiple styles like cartoon, graffiti and origami which is not seen during pre-training. 
    \item \textbf{CUB200} is a fine-grained dataset and has 200 classes of different bird species provided in 5994 training and 5794 testing images. 
    \item \textbf{Stanford Cars} has 196 classes of cars with 8144 training images and 8041 test images.
    \item \textbf{iNaturalist-Users-120k}~\citep{hsu2019measuring} is a real-world, large-scale dataset~\citep{van2018inaturalist} proposed by~\citep{hsu2019measuring} for federated learning and contains 120k training images of natural species taken by citizen scientists around the world, belonging to 1203 classes spread across 9275 clients. 
\end{itemize}

In datasets like ImageNet-R and CARS, we also face class-imbalanced situations where there is a significant class-imbalance at the global level.

\minisection{Implementation Details.}
Here, we provide details on learning rate (lr) used for all fine-tuning experiments with FedAdam. For ImageNet-R and Stanford Cars, we use a lr of 0.0001 for both server and clients for FedNCM, Fed3R and FedCOF initializations. For CUB200, we use a server lr of 0.00001 and client lr of 0.00005 for Fed3R and FedCOF, while for FedNCM, we use a higher lr of 0.0001 for clients. For random classifier initialization with all datasets, we use a higher lr of 0.001 for clients and lr of 0.0001 for server. We use 1 local epoch, for all fine-tuning experiments on 4 datasets. After training-free classifier initialization, we fine-tune the models for 100 rounds. When starting from random classifier initialization, we train more for 200 rounds. When training with FedAvg and random classifier initialization, we use a client lr of 0.005 for all datasets other than inat-120K.

For the linear probing (LP) experiments for the 4 datasets other than iNat-120K, with FedAvg we train for 200 rounds with 1 local epoch and use a client lr of 0.01 and server lr of 1.0 for FedNCM. For Fed3R and FedCOF initializations, we use a client lr of 0.001 and a server lr of 1.0. For LP experiments on iNat-120K, we use 3 local epochs, 30\% client participation and train for 5000 rounds. For iNat-120K, we use a client lr of 0.001 for FedAvg-LP without classifier initialization, a client lr of 0.0005 for FedNCM and client lr of 0.00001 for Fed3R and FedCOF.

We use one Nvidia RTX 6000 GPU for all our experiments. 

\minisection{The FedCOF Oracle (Sharing Full Covariances).}
Similar to CCVR~\citep{luo2021no}, we aggregate the class covariances from clients as follows:
\begin{equation}
    \hat{\Sigma}_c = \sum_{k=1}^{K}\frac{n_{k,c}-1}{N_c - 1} \hat{\Sigma}_{k,c} + \sum_{k=1}^{K} \frac{n_{k,c}}{N_c - 1}\hat{\mu}_{k,c}\hat{\mu}_{k,c}^T - \frac{N_c}{N_c - 1}\hat{\mu}_c\hat{\mu}_c^T.
    \label{eq:cov_aggregate}
\end{equation}
For the oracle setting of FedCOF, we use the aggregated class covariance from~\cref{eq:cov_aggregate} and apply shrinkage to obtain $\hat{\Sigma}_c + \gamma I_d$ and use it in~\cref{eq:classifier-init} instead of using our estimated covariances.

%%%%%%%%%%%%%%%%%%%%%%%%%%%%%%%%%%%%%%%

\section{Additional Experiments}
\label{app-more-exp}

\minisection{Adapting CCVR for Classifier Initialization.}

\begin{table*}[ht]
\centering
\caption{Comparison of FedCOF with CCVR across different datasets and models.}
\resizebox{0.9\textwidth}{!}{
\begin{tabular}{@{}c|c|cc|cc|cc@{}}
\toprule
Dataset & Method & \multicolumn{2}{c|}{SqueezeNet (d=512)} & \multicolumn{2}{c|}{MobileNetv2 (d=1280)} & \multicolumn{2}{c}{ViT-B/16 (d=768)} \\
 & & Acc (↑) & Comm. (↓) & Acc (↑) & Comm. (↓) & Acc (↑) & Comm. (↓) \\
\midrule
CIFAR-100 & CCVR & \textbf{57.5}±0.2 & 3015.3 & 59.6±0.2 & 18823.5 & 72.3±0.2 & 6780.0 \\
 & FedCOF (Ours) & 56.1±0.2 & \textbf{5.9} & \textbf{63.5}±0.1 & \textbf{14.8} & \textbf{73.2}±0.1 & \textbf{8.9} \\
\midrule
IN-R & CCVR & 36.4±0.2 & 3645.7 & 41.9±0.2 & 22758.8 & 49.3±0.2 & 8197.4 \\
 & FedCOF (Ours) & \textbf{37.8}±0.4 & \textbf{7.1} & \textbf{47.4}±0.1 & \textbf{17.8} & \textbf{51.8}±0.3 & \textbf{10.7} \\
\midrule
CUB200 & CCVR & 51.2±0.1 & 2472.1 & 61.6±0.2 & 15432.7 & 78.7±0.4 & 5558.6 \\
 & FedCOF (Ours) & \textbf{53.7}±0.3 & \textbf{4.8} & \textbf{62.5}±0.4 & \textbf{12.0} & \textbf{79.4}±0.2 & \textbf{7.2} \\
\midrule
Cars & CCVR & 40.9±0.4 & 2767.3 & 36.0±0.4 & 17275.7 & 49.4±0.4 & 6222.5 \\
 & FedCOF (Ours) & \textbf{44.0}±0.3 & \textbf{5.4} & \textbf{47.3}±0.5 & \textbf{13.5} & \textbf{52.5}±0.3 & \textbf{8.1} \\
\bottomrule
\end{tabular}}
\label{ccvr-table}
\end{table*}

\begin{wraptable}{r}{7cm}
\centering
\caption{Performance comparison of different initialization methods followed by FedAdam optimization for 100 rounds of training.}
\resizebox{0.5\textwidth}{!}{
\begin{tabular}{@{}l|c|c|c@{}}
\toprule
Method & ImageNet-R & CUB200 & Cars \\
\midrule
FedNCM+FedAdam  & 44.7±0.1 & 50.2±0.2 & 48.7±0.2 \\
CCVR+FedAdam    & 44.6±0.3 & 51.5±0.2 & 47.9±0.1 \\
Fed3R+FedAdam  & 45.9±0.3 & 51.2±0.3 & 47.4±0.4 \\
FedCOF+FedAdam & \textbf{46.0}±0.4 & \textbf{55.7}±0.4 & \textbf{49.6}±0.6 \\
\bottomrule
\end{tabular}}
\label{ccvr-table2}
\end{wraptable}

Since CCVR~\citep{luo2021no} was originally proposed to calibrate classifiers after training, we adapt it to our setting and use it as an initialization method. CCVR is similar to the FedCOF Oracle in terms of communication cost as it shares class means and covariances from all clients to the server. CCVR aggregates the class covariances and
means to obtain a global class distribution at the server, and then trains a linear classifier on Gaussian features sampled from aggregated class distributions from clients. We show in~\cref{ccvr-table} that the proposed FedCOF outperforms CCVR in most settings despite having significantly lower communication cost.

We also show in~\cref{ccvr-table2} that FedCOF initialization is better for further fine-tuning using a pre-trained SqueezeNet that we finetune with FedAdam for 100 rounds after different initialization methods.

\minisection{Linear probing after initialization experiments.}
We show in~\cref{fig:sq_lp} that linear probing after FedCOF classifier initialization improves the accuracy significantly compared to FedNCM and is marginally better than Fed3R initialization across three datasets using SqueezeNet.

\begin{figure}
    \centering
    \resizebox{1\textwidth}{!}{
    {\includegraphics[width=0.33\textwidth]{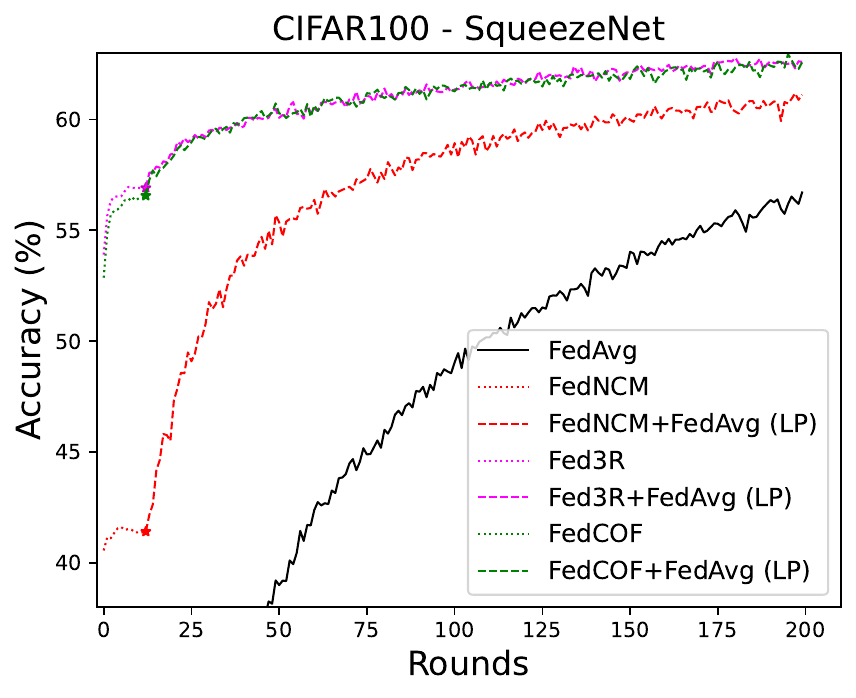}}
    \hfill
    {\includegraphics[width=0.33\textwidth]{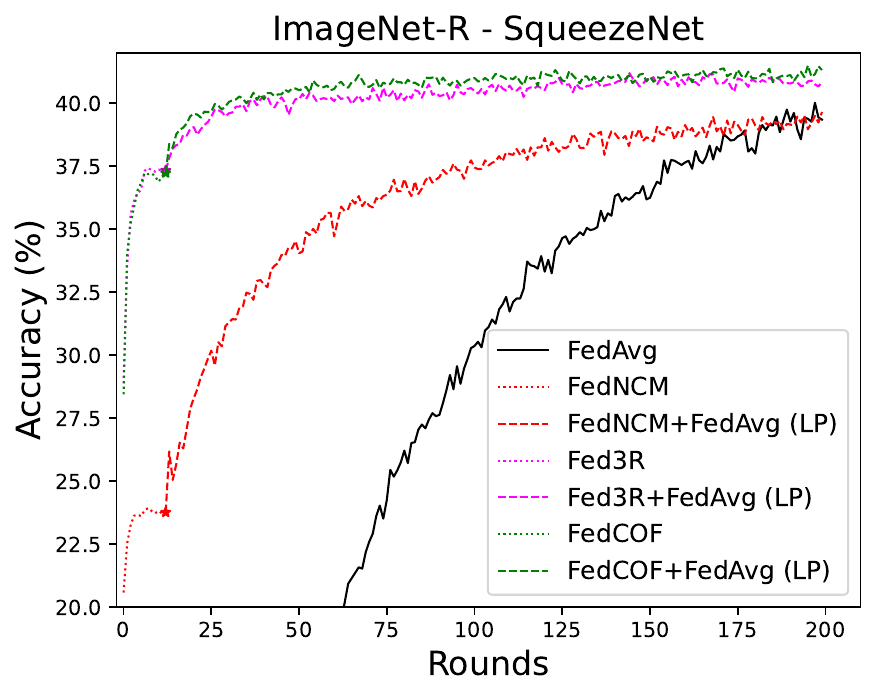}}
    \hfill
    {\includegraphics[width=0.33\textwidth]{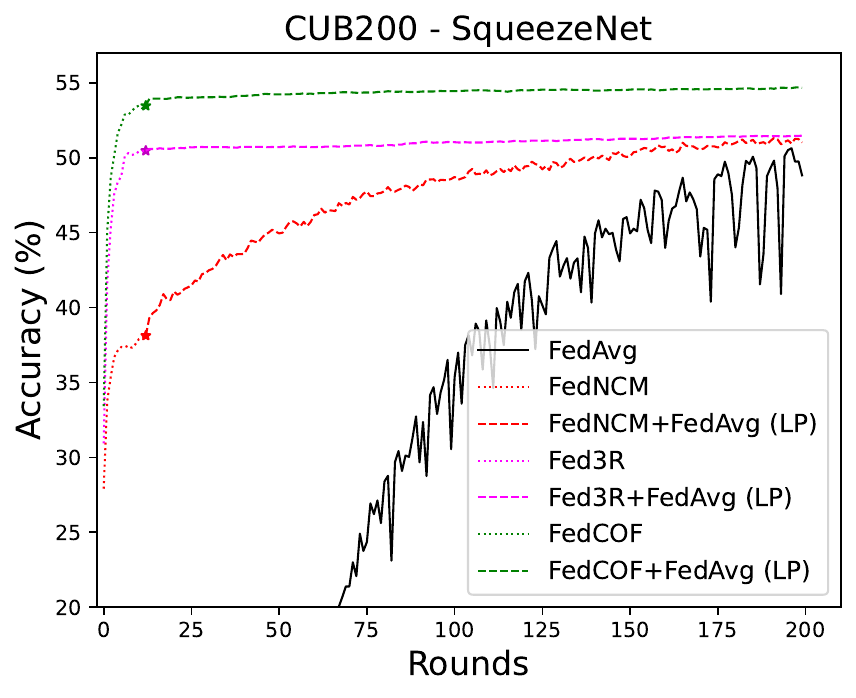}}
    }
    \caption{Analysis of the performance with federated linear probing using FedAvg~\citep{mcmahan2017communication}.}
    \label{fig:sq_lp}
\end{figure}

\begin{table*}
\begin{center}
\caption{Comparison of different training-free methods using SqueezeNet with training-based Fed-LP (federated linear probing with FedAvg~\citep{mcmahan2017communication} starting with pre-trained model and random classifier initialization) across 5 random seeds. FedNCM, Fed3R and the proposed FedCOF does not involve any training. We show the total communication cost (in MB) from all clients to server. The best results from each section are highlighted in \textbf{bold}.}
\resizebox{\textwidth}{!}{%
\begin{tabular}{c|cc|cc|cc|cc|cc}
\hline
& \multicolumn{2}{c|}{CIFAR-100} & \multicolumn{2}{c|}{ImageNet-R} & \multicolumn{2}{c|}{CUB200} & \multicolumn{2}{c}{CARS} & \multicolumn{2}{c}{iNat-120K} \\
Method & Acc ($\uparrow$) & Comm. ($\downarrow$) & Acc ($\uparrow$) & Comm. ($\downarrow$) & Acc ($\uparrow$) & Comm. ($\downarrow$) & Acc ($\uparrow$) & Comm. ($\downarrow$) & Acc ($\uparrow$) & Comm. ($\downarrow$) \\
\hline
\rule{0pt}{2ex} 
Fed-LP & \textbf{59.9}$\pm$0.2 & 2458 & \textbf{37.8}$\pm$0.3 & 4916 & 46.8$\pm$0.8 & 4916 & 33.1$\pm$0.1 & 4817 & 28.0$\pm$0.6 & 1.6$\times 10^6$  \\
\hline
\rule{0pt}{2ex} 
FedNCM  & 41.5$\pm$0.1 & \textbf{5.9}  & 23.8$\pm$0.1 & \textbf{7.1} & 37.8$\pm$0.3 & \textbf{4.8} & 19.8$\pm$0.2 & \textbf{5.4} & 21.2$\pm$0.1 & \textbf{111.8} \\
Fed3R & 56.9$\pm$0.1 & 110.2 & 37.6$\pm$0.2 & 111.9 & 50.4$\pm$0.3 & 109.6 & 39.9$\pm$0.2 & 110.2 & 32.1$\pm$0.1 & 9837.3 \\
FedCOF (Ours) & 56.1$\pm$0.2 & \textbf{5.9} & \textbf{37.8}$\pm$0.4 & \textbf{7.1} & \textbf{53.7}$\pm$0.3 & \textbf{4.8} & \textbf{44.0}$\pm$0.3 & \textbf{5.4} & \textbf{32.5}$\pm$0.1 & \textbf{111.8} \\
\hline
\end{tabular}
}

\vspace{-10pt}
\label{table_lp2}
\end{center}
\end{table*}

\minisection{Comparison of training-free methods with linear probing.}
We also compare with our approach with the training-based federated linear probing without any initialization (where we perform FedAvg and learn only the classifier weights of models) and show in~\cref{table_lp2} that FedCOF is more robust and communication-efficient compared to federated linear probing across several datasets. We follow the same settings as in~\cref{table1}. For first 4 datasets, we perform federated linear probing for 200 rounds with 30 clients per round using FedAvg with a client learning rate of 0.01. For iNat-120k, we train more for 5000 rounds.

\minisection{Impact of using pre-trained models.}
To quantify impact of using pre-trained models we performed experiments using a randomly initialized model and show in~\cref{table_pre-training} that federated training using a pre-trained model significantly outperforms a randomly initialized model using standard methods like FedAvg and FedAdam on CIFAR-10 and CIFAR-100. 

\begin{table}[H]
\begin{center}
\caption{Impact of using a pre-trained SqueezeNet with different federated learning methods on CIFAR-10 and CIFAR-100. We show the total communication cost (in MB). We train a total of 100 clients with 30 clients per round for 200 rounds in non-iid settings with a Dirichlet distribution of 0.1. When starting from random initialization (no pre-training), we train for 400 rounds.}
\vspace{5pt}
\resizebox{0.7\textwidth}{!}{
\begin{tabular}{c|c|cc|cc}
\hline
& & \multicolumn{2}{c|}{CIFAR-10} & \multicolumn{2}{c}{CIFAR-100}   \\
Method & Pre-trained & Acc ($\uparrow$) & Comm. ($\downarrow$) & Acc ($\uparrow$) & Comm. ($\downarrow$)  \\
           % \midrule
\hline
\rule{0pt}{2ex} 
FedAvg  & $\times$ & 37.3 & 74840 & 23.9 & 79248 \\
FedAdam & $\times$ & 60.5 & 74840 & 44.3 & 79248  \\
\hline
\rule{0pt}{2ex} 
FedAvg  & $\checkmark$ & 84.7 & 37420 & 56.7 & 39624 \\
FedAdam  & $\checkmark$ & 85.5 & 37420 & 62.5 & 39624 \\
\hline
\end{tabular}
}
\vspace{-15pt}
\label{table_pre-training}
\end{center}
\end{table}

\minisection{Experiments with ResNet-18.}
We perform experiments with pre-trained ResNet-18 in~\cref{table_ft}. 
For FedAvg and FedAdam, we train for 200 rounds with 30 clients per round. For FedAvg, we train with a client learning rate of 0.001 and server learning rate of 1.0. For FedAdam, we train with a client learning rate of 0.001 and a server learning rate of 0.0001. We show that fine-tuning after FedCOF classifier initialization for 100 rounds outperforms competitive FL methods like FedAdam (which are trained for 200 rounds) by 2.5\% on CIFAR-100 and 5.1\% on ImageNet-R. 
The improved performance with FedCOF initialization validates the effectiveness of the proposed method, as it reduces communication and computation costs by half compared to FedAdam and FedAvg and still outperforms them.

\begin{table}[H]
 
\begin{center}
 
\caption{Comparison of different training-free methods using pre-trained ResNet-18 for 100 clients with training-based federated learning baselines  FedAvg~\citep{mcmahan2017communication} and FedAdam~\citep{reddi2020adaptive} starting from a pre-trained model. We train for 200 rounds for FedAvg and FedAdam which uses pre-trained backbone and random classifier initialization. FedNCM, Fed3R and the proposed FedCOF do not involve any training. We also show the performance of fine-tuning with FedAdam after classifier initialization. For fine-tuning experiments we only train for 100 rounds after initialization. We show the total communication cost (in MB) from all clients to server. The best results from each section are highlighted in \textbf{bold}.}
\vspace{5pt}
\resizebox{0.7\textwidth}{!}{
\begin{tabular}{c|cc|cc}
\hline
% &  \multicolumn{4}{c}{ResNet18} \\
% \hline
& \multicolumn{2}{c|}{CIFAR-100} & \multicolumn{2}{c}{ImageNet-R}   \\
Method & Acc ($\uparrow$) & Comm. ($\downarrow$) & Acc ($\uparrow$) & Comm. ($\downarrow$) \\
           % \midrule
\hline

\rule{0pt}{2ex} 
FedAvg & 67.7 & 538k & 56.0 &  541k  \\
FedAdam & 74.4 & 538k & 57.1 & 541k \\
\hline
\rule{0pt}{2ex} 
FedNCM  & 53.8 & \textbf{5.9}  & 37.2 & \textbf{7.1}  \\
Fed3R & 63.5 & 110.2 & 45.9 & 111.9  \\
FedCOF (Ours) & 63.3 & \textbf{5.9} & 46.4 & \textbf{7.1}  \\ 
\hline
\rule{0pt}{2ex} 
FedNCM+FedAdam & 75.7 & 269k & 60.3 & 271k \\
Fed3R+FedAdam & 76.8 & 269k & 60.6 & 271k\\
FedCOF+FedAdam & \textbf{76.9} & 269k & \textbf{62.2} & 271k\\
\hline
\end{tabular}
}

\vspace{-5pt}
\label{table_ft}
\end{center}
\end{table}

\begin{figure}[H]
    \centering
    {\includegraphics[width=0.32\textwidth]{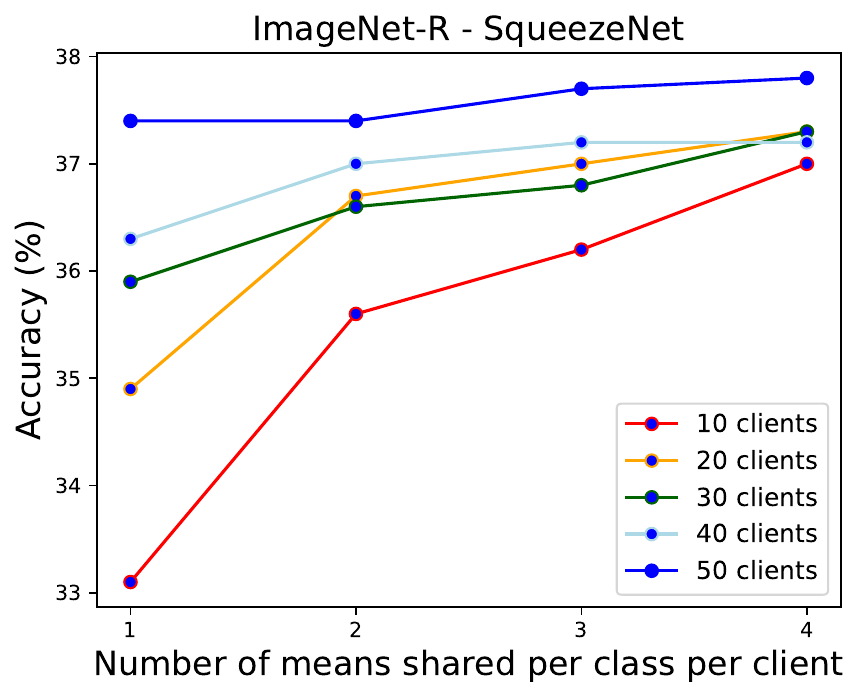}}
    \hfill
    {\includegraphics[width=0.32\textwidth]{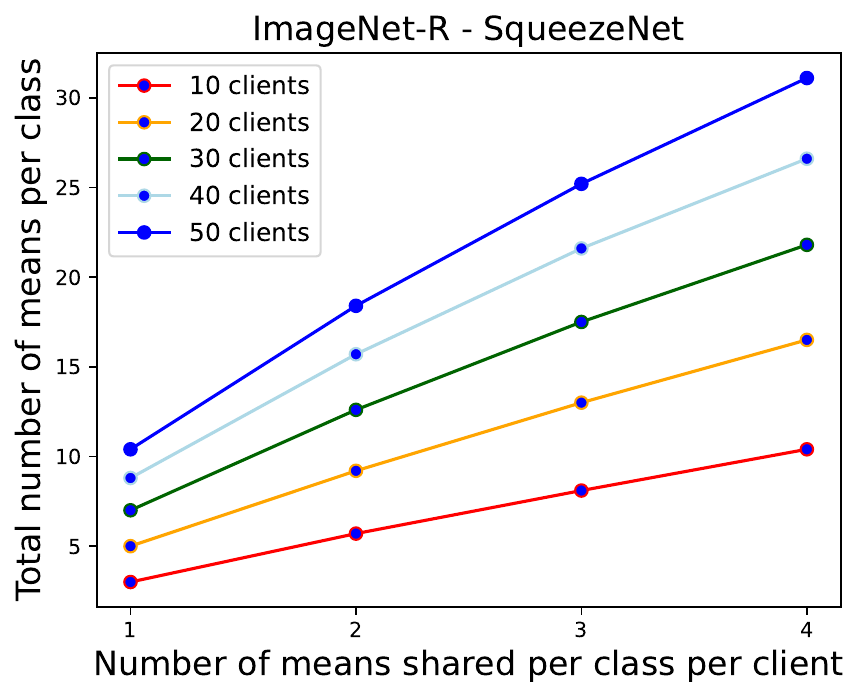}}
    \hfill
    {\includegraphics[width=0.32\textwidth]{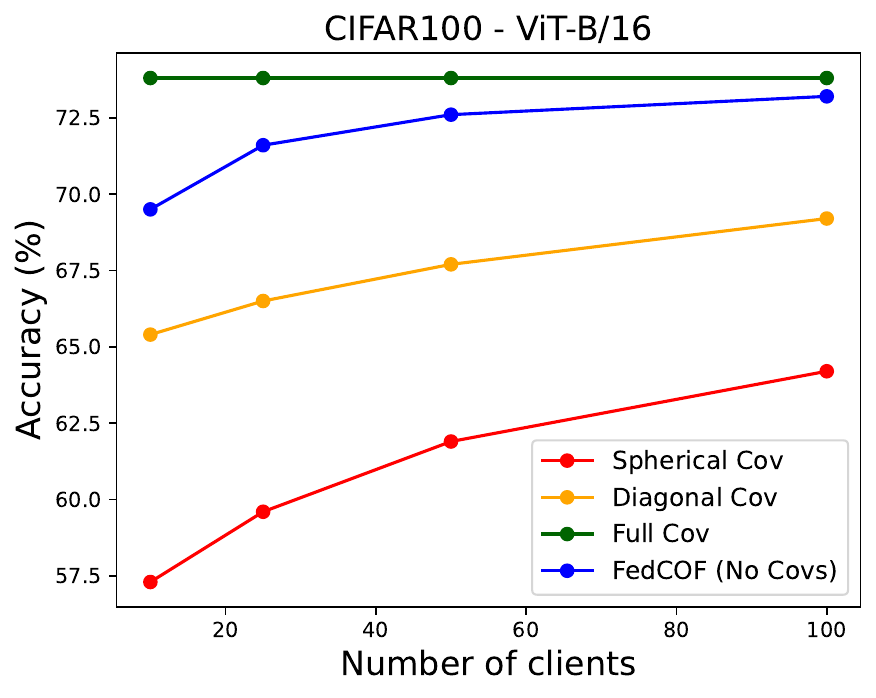}}
    \caption{(left) Analysis of FedCOF performance with multiple class means per client on ImageNet-R. (center) Total number of means per class on average that are used to estimate the covariance for FedCOF in~\cref{fig:add-ablation} (left). (right) Performance comparison of FedCOF with full, diagonal, and spherical covariance matrix communication.}
    \label{fig:add-ablation}
    % \vspace{-10pt}
\end{figure}

\section{Additional Ablations}
\label{app-more-ablate}

\minisection{Sampling multiple class means.}
We perform multiple class means sampling per client using ImageNet-R and show in~\cref{fig:add-ablation} (left) that using FedCOF with more class means shared from each client improves the performance. We also show in~\cref{fig:add-ablation} (center) the total number of means used per class on an average in~\cref{fig:add-ablation} (left) to perform the covariance estimation. The number of means used to estimate each class covariance is less than the total number of clients due to the class-imbalanced or dirichlet distribution used to sample data for clients. 
This is due to the fact that not all classes are present in all clients.

In most settings we observe that the largest performance improvement occurs when increasing the number of class means per client in scenarios with fewer clients. For instance, with only 10 clients, accuracy improves from 33.1\% to 37.0\% when increasing from 1 to 4 means per client. In contrast, when 50 clients are available, the improvement is marginal (from 37.4\% to 37.8\%). This trend is consistent with the theoretical insights in~\cref{app-sampling}: when fewer clients are available, sampling more means per client increases the number of independent statistics, thereby reducing variance and improving the quality of the covariance estimate.

\minisection{Communicating diagonal or spherical covariances.}
While communicating diagonal or spherical covariances (mean of the diagonal covariance) from clients to server and then estimating the global class covariance from them can significantly reduce the communication cost, such estimates of global class covariance is poor compared to FedCOF. We show in~\cref{fig:add-ablation} (right) that FedCOF outperforms these covariance sharing baselines when communicating spherical or diagonal covariances.

\begin{wraptable}{r}{8cm}
% \centering
% \begin{table}
% \vspace{-15pt}
\begin{center}
\caption{Ablation showing the impact of using shrinkage in FedCOF using a pre-trained SqueezeNet.}
% \vspace{5pt}
\resizebox{0.55\textwidth}{!}{
\begin{tabular}{c|ccccc}
\hline
Dataset & $\gamma=0$ & $\gamma=0.01$ & $\gamma=0.1$ & $\gamma=1$ & $\gamma=10$ \\
           % \midrule
\hline

\rule{0pt}{2ex} 
ImageNet-R & 36.53 & 36.98 & 36.96 & 37.25 & 36.07 \\
CUB200 & 51.08 & 51.07 & 51.81 & 53.57 & 53.50 \\ 
\hline
\end{tabular}
}
\vspace{-10pt}
\label{table_shrinkage}
\end{center}
% \end{table}
\end{wraptable}

\minisection{Impact of Shrinkage.} We analyze the impact of shrinkage on the estimated class covariances in FedCOF when using a pre-trained SqueezeNet in~\cref{table_shrinkage}. We use a shrinkage $\gamma=1$ for our experiments with SqueezeNet and ViT-B/16. We observe that shrinkage yields marginal improvement for ImageNet-R and a bit more significant improvement in accuracy of 2.5\% on CUB200. This can be attributed to the few-shot settings where the covariance estimation is not very good due to the lack of data and thus fewer clients having access to each of the classes. The use of shrinkage in FedCOF stabilizes and improves the covariance estimation leading to improved accuracy especially in few-shot settings.

\minisection{Severe imbalance settings.} We further evaluate FedCOF in settings with more severe class imbalance or heterogeneity across clients (Dirichlet distribution with $\alpha=0.05, 0.01$) in~\cref{table_severe} and show that the performance of FedCOF drops a bit with very severe heteregeneity as expected.

\begin{table}[H]
\centering
\caption{Analysis of FedCOF performance across settings with varying heterogeneity. }
\vspace{5pt}
\resizebox{1\textwidth}{!}{
\begin{tabular}{c|cccc|cccc}
\hline
Class means shared & \multicolumn{4}{c|}{ImageNet-R (100 clients)} & \multicolumn{4}{c}{Cars (100 clients)} \\
per client & $\alpha=0.5$ & $\alpha=0.1$ & $\alpha=0.05$ & $\alpha=0.01$ & $\alpha=0.5$ & $\alpha=0.1$ & $\alpha=0.05$ & $\alpha=0.01$ \\
\hline
1 & 38.4 & 37.3 & 36.4 & 33.5 & 45.0 & 44.5 & 43.1 & 39.9 \\
2 & 38.2 & 37.8 & 37.4 & 35.9 & 44.9 & 44.5 & 43.8 & 42.5 \\
3 & 38.1 & 37.5 & 37.5 & 36.7 & 44.9 & 44.7 & 44.2 & 43.2 \\
4 & 38.3 & 37.7 & 38.2 & 37.4 & 45.1 & 44.9 & 44.4 & 43.6\\
\hline
\end{tabular}}
\label{table_severe}
\end{table}

\begin{wraptable}{r}{7.5cm}
\vspace{-15pt}
\begin{center}
\caption{Total means shared per class on average (the number of clients each class is present in, on average).}
\resizebox{0.55\textwidth}{!}{
\begin{tabular}{c|cccc}
\hline
Dataset & $\alpha=0.5$ & $\alpha=0.1$ & $\alpha=0.05$ & $\alpha=0.01$ \\
\hline
ImageNet-R & 36.1 & 17.4 & 11.7 & 4.0 \\
Cars       & 24.5 & 13.4 & 9.6  & 3.6 \\
\hline
\end{tabular}
}
\vspace{-10pt}
\label{table_imbalance}
\end{center}
% \end{table}
\end{wraptable}

We show the impact of class imbalance in different settings in~\cref{table_imbalance}. Using the most extreme setting ($\alpha=0.01$), each class is present on 4 clients on an average. Although the extreme settings are less unrealistic, we analyze and evaluate FedCOF in those settings. For instance, the drop in performance of FedCOF on Cars from 44.5 to 39.9 is due to the fact that the global covariance is estimated from around 3.6 sample means instead of around 13.4 sample means. This scenario faces the same issue as fewer clients due to the severe class imbalance. Here, we show that sampling multiple class means per client improves the performance in the extreme settings mitigating the bias in these settings.

\end{document}